\newcommand{\code}[1]{\texttt{#1}} 
\DeclareMathOperator*{\argmin}{arg\,min} 
\newcommand{\Set}[1]{\mathcal{#1}}
\newcommand{\EmptySet}{\emptyset}
\newcommand{\Rtwo}{\mathbb{R}^2}
\newcommand{\PathPair}{P}
\newcommand{\ConeMap}{\Set{M}}
\newcommand{\Edges}{\Set{E}}
\newcommand{\Vertices}{\Set{V}}
\newcommand{\SearchGraph}{\Set{G}}
\newcommand{\CarPose}{\mathbf{P}_{car}}
\newcommand{\SetLeftPaths}{\Set{L}}
\newcommand{\SetRightPaths}{\Set{R}}
\newcommand{\SolutionSet}{\Set{S}}
\newcommand{\FeasibleSolutionSet}{\Set{S}_c}
\newcommand{\NextSide}{s}
\newcommand{\EnumeratedPathPairs}{\Phi}
\newcommand{\LeftSpline}{L}
\newcommand{\RightSpline}{R}
\newcommand{\LanePolygon}{D}
\newcommand{\MatchinPoints}{M_{ps}}
\newcommand{\LaneWidth}{W}
\newcommand{\NextVertexDecider}{\code{NVD}}
\newcommand{\ConstratintDecider}{\code{CD}}
\newcommand{\BacktrackingDecider}{\code{BTD}}
\newcommand{\MinLaneWidth}{w_{min}}
\newcommand{\MaxConeDistance}{d_{max}}
\newcommand{\MaxLaneWidth}{w_{max}}
\newcommand{\MaxSemgentsAngle}{\phi_{max}}
\newcommand{\ConstraintLaneWidth}{C_{width}}
\newcommand{\ConstraintSegments}{C_{seg}}
\newcommand{\ConstraintSimplePoly}{C_{poly}}
\newcommand{\DefaultMaxIterations}{2500}
\theoremstyle{definition}
\newtheorem{definition}{Definition}
\newtheorem{theorem}{Theorem}
\newtheorem{lemma}[theorem]{Lemma}
\theoremstyle{remark}
\newcommand{\DatasetNumTracks}{9}
\begin{document}
	
	\title{Lane Detection using Graph Search and Geometric Constraints for Formula Student Driverless}
	\author{
		\IEEEauthorblockN{Ivo Ivanov, Carsten Markgraf}
		\thanks{Ivo Ivanov and Carsten Markgraf are with the Faculty of Electrical Engineering at the Technical University Augsburg, Augsburg, Germany. Corresponding Author Email: \code{ivo.ivanov@hs-augsburg.de}}  
	}
	
	\maketitle
	
	\IEEEpeerreviewmaketitle 
	
	\begin{abstract}
	 Lane detection is a fundamental task in autonomous driving. While the problem is typically formulated as the detection of continuous boundaries, we study the problem of detecting lane boundaries that are sparsely marked by 2D points with many false positives. This problem arises in the Formula Student Driverless (FSD) competition and is challenging due to its inherent ambiguity. Previous methods are inefficient and unable to find long-horizon solutions. We propose a deterministic algorithm called CLC that uses backtracking graph search with a learned likelihood function to overcome these limitations. We impose geometric constraints on the lane candidates to guarantee a geometrically sound lane. Our exhaustive search leads to finding the global optimum in 45\% of instances, and the algorithm is overall robust to up to 50\% false positives. Our algorithm runs in less than 15 ms on a single CPU core, meeting the low latency requirements of autonomous racing. We extensively evaluate our method on real data and realistic racetrack layouts, and show that it outperforms the state-of-the-art by detecting long lanes over 100 m with few (0.6\%) critical failures. This allows our autonomous racecar to drive close to its physical limits on a previously unknown racetrack without being limited by perception. We release our dataset with realistic Formula Student racetracks to enable further research.
	
	\end{abstract}
	
	\begin{IEEEkeywords}
		Autonomous Racing, Lane Detection, Formula Student, Graph Search, Combinatorial optimization, Geometric Constraints
	\end{IEEEkeywords}
	
	\section{Introduction}
	\begin{figure}[!h]
		\centering
		\includegraphics[width=0.99\linewidth]{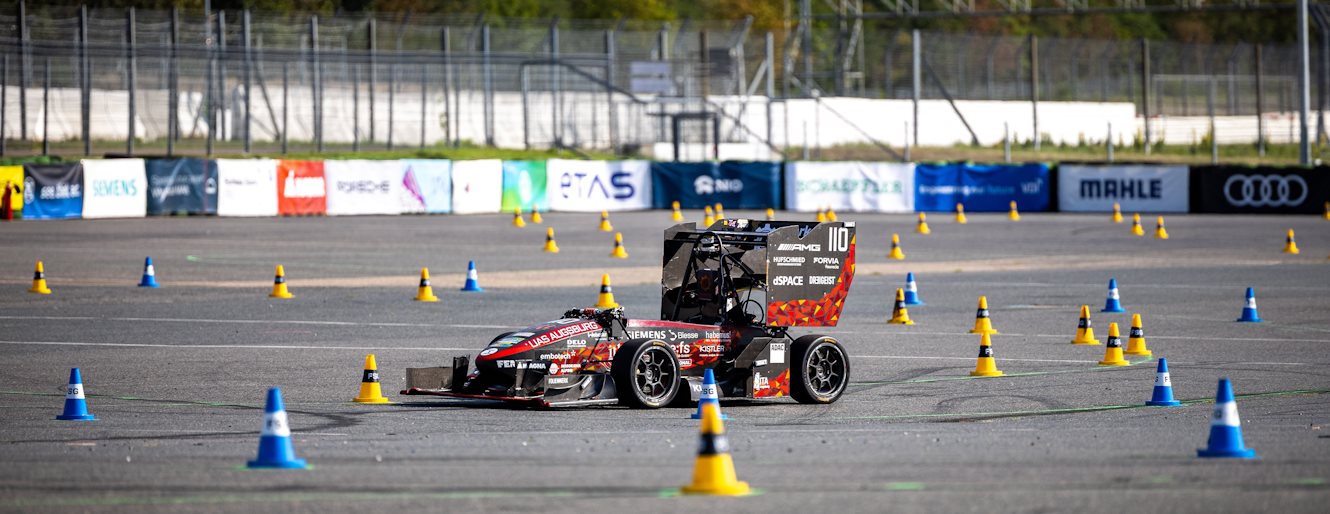}
		\caption{The autonomous race car of the team StarkStrom Augsburg at the Formula Student Germany competition 2023 on the race track marked by blue and yellow traffic cones. (Copyright FSG Lodholz).}
		\label{fig:rokofsg2023}
	\end{figure}
	
	Autonomous vehicle racing is a challenging application of autonomous driving, featuring competitions like the Indy Autonomous Challenge, the F1TENTH competition, and the Formula Student Driverless (FSD) competition \cite{9790832}. Autonomous racing poses numerous challenges to the system and algorithms employed. Due to the high speeds, a high degree of uncertainty originates from short perception intervals, at the same time a low-latency perception and processing is required. When driving at high speeds up to 20 m/s (72 km/h) on an unknown track in the FSD-competition, reliable lane detection is crucial for optimal planning of the racing line to achieve a minimum lap time. 
	
	This paper focuses on lane detection using only 2D-points, a problem arising in the FSD competition where the lane is marked with traffic cones (Fig. \ref{fig:rokofsg2023}). Other potential applications of the approach presented in this paper besides the FSD competition may include detecting lanes on road construction sites \cite{6629537} and making the lane detection more robust to unreliable perception as well as resolving ambiguity arising due to damaged lane markings.
	
	The FSD competition is a student engineering competition on autonomous racing using self-developed race vehicles. Among several disciplines in this competition, the \textit{Autocross} is the most challenging one
	for the perception and planning systems 
	as the track layout is not known in advance. In this discipline, the vehicle must drive one lap autonomously on a closed circuit in the minimum time without leaving the track.
	
	As typical for autonomous robots, the perception pipeline of an FSD race car first detects cones with LiDAR-sensors and cameras and then builds a map of the environment with SLAM-algorithms using these detected cones, usually as a set of 2D points \cite{9341702, Kabzan2019AMZDT, alvarez2022software}, a landmark map. After mapping, the objective is to detect the two boundaries in this set of points defining the lane.
	
	\begin{figure}[!tb]
		\centering	
			\begin{adjustbox}{width=1.\linewidth}
				\includegraphics{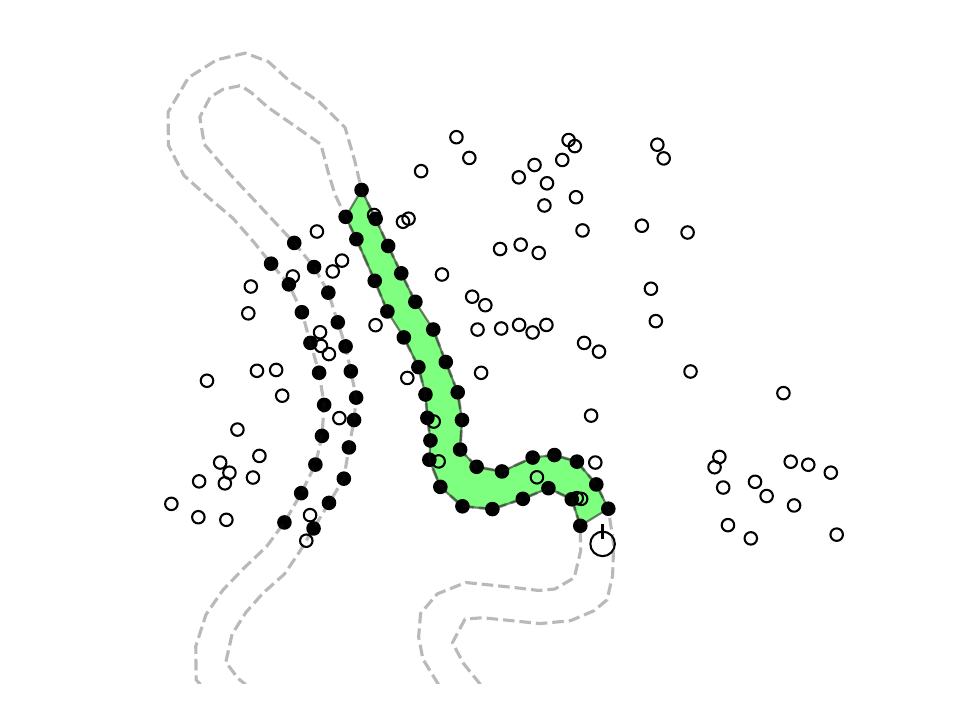}
				\includegraphics{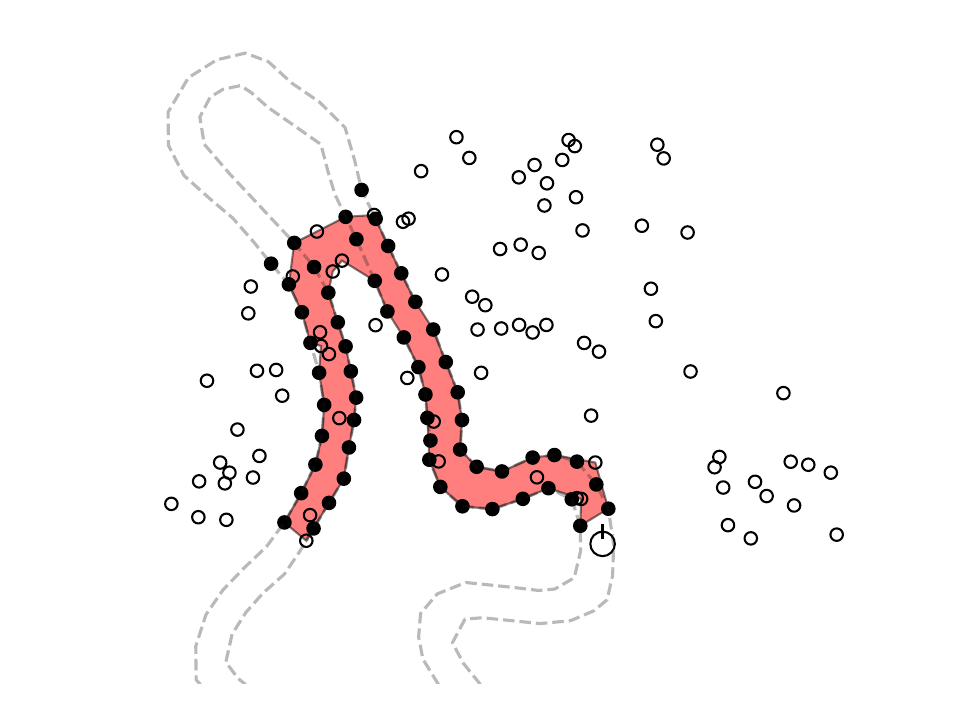}
			\end{adjustbox}
		\caption{Detecting the driving lane (outlined in gray) on race tracks where the boundaries are marked only sparsely by points and with many false-positives (empty circles) introduces ambiguity: On the left is the correct prediction, but given only the partial observation of the boundary points, it may be detected incorrectly as shown on the right, taking a shortcut. } 
		\label{fig:ambiguity-problem}
	\end{figure}

	Lane detection is a well-studied task in autonomous driving where continuous curves are detected in camera images or LiDAR point clouds \cite{4580573, 9576699}.
	In the FSD competition however, the problem is quite different as the boundaries are marked by points as opposed to continuous lines. Also, lanes of racetracks are usually much more curvy and also the high number of false-positives introduces ambiguities (see Fig. \ref{fig:ambiguity-problem}).
	
	Previous methods are only able to detect short sections of the track, also they are prone to false-positives and inefficient. Proposed methods based on mixed-integer optimization are robust but do not scale well with growing number of input points due to long solve times and have thus only a short horizon \cite{9197098}. The motivation of our work is therefore the question: \textit{Can we detect reliably a driving lane sparsely marked by points and many false-positives over a long-horizon, enabling an FSD-race car to utilize its full performance ?}
    
	We propose a deterministic lane detection algorithm named Cone Lane Connector (CLC), which uses a modified depth-first graph search combined with geometric constraint checking to find candidate solutions. This step makes our approach not only robust by guaranteeing geometrically sound lanes, but also efficient by allowing pruning of the search space through backtracking. In a second step, we use a machine learning approach to find the most likely candidate lane by ranking.
	Due to the exhaustive search with effective heuristics, our method is able to cope with a high number of false positives. Surprisingly, although the combinatorial solution space of two paths in a graph is prohibitively large, enumeration becomes feasible when imposing these geometric constraints. This allows us to find the global optimum in many cases. The machine learning approach to ranking also eliminates the need for hand-tuning heuristics.
	
	We summarize the contributions of this work as follows:
	\begin{enumerate}
		\item Using heuristically guided graph search that enumerates simultaneously two paths in a graph (path pairs) for finding feasible lanes with boundaries marked by points (Alg. \ref{alg:enumerate_path_pairs_rec})
		\item Carefully designed geometric constraints which guarantee lane soundness and allow for an effective pruning of the search space with backtracking
		\item Using a neural network for ranking to find the most likely lane
		\item An open dataset with real formula student race-track layouts including annotated ground-truth driving lanes 
	\end{enumerate}

	We released our dataset with racetrack layouts at \footnote{\href{https://github.com/iv461/fsd\_racetrack\_dataset}{https://github.com/iv461/fsd\_racetrack\_dataset}}. A video demonstrating the overall performance of our autonomous race car driving at speeds up to 20 m/s (72 km/h) on a previously unknown race track using the proposed lane detection is available at: \href{https://youtu.be/9MWKDJeAEDU}{https://youtu.be/9MWKDJeAEDU}
	
	\section{Related Work}
	\label{sec:related}
	
	\subsection{Lane detection from boundaries marked by points}
	
	Although lane detection is a well-studied topic in autonomous driving, only a few works address the problem of detecting lanes marked by points, most of them are directly related to the FSD-competition.
	In \cite{Kabzan2019AMZDT}, Kabzan et al. describe an approach for lane detection for the FSD competition based on exhaustive search with breadth-first search (BFS) and beam search. They construct the search graph via Delaunay triangulation and formulate the likelihood of candidate lanes using handcrafted geometric features. They do not use hard constraints, therefore geometrically sound solutions are only favored instead of enforced. This work is improved in \cite{9341702} where Andresen et al. propose to use Bayesian inference approach for deriving the likelihood of a lane based on the color of all observed cones and hand-tuned probabilities. A detection horizon of up to 15m is reported which is rather short and limits the planning horizon.
	
	In \cite{9197098}, Brandes et al. also study lane detection for the FSD-competition by framing it as a constraint optimization problem in the domain of two paths in a graph. They formulate geometric constraints and use a handcrafted geometric likelihood function to select the most likely candidate. As a solving strategy, they formulate the problem as a constrained binary integer optimization problem to be able to use off-the-shelf solvers (e.g. \textit{Mosek}). While their declarative approach is elegant, it suffers from inefficiency -- it scales with the number of points in the map input instead of the number of feasible candidate solutions. This requires limiting the number of input points to only 12-16 in order to meet the real-time requirements, which makes the overall detection horizon rather short.
	
	In \cite{6629537}, Tanzmeister et al. study lane detection of a road lane marked with traffic cones, which is the same problem as it arises in the FSD-competition.
	Instead of detecting the lane boundaries directly, a path inside the lane is detected. The authors use the path planning algorithms \textit{Rapidly-exploring Random Trees} (RRT) and A* as a search strategy and impose kinematic constraints.
	Although RRT seems to work for this lane detection problem where the boundaries are marked by points, path planning has the different objective of finding a continuous curve in a continuous search space \cite{lavalle2001rapidly, vonasek2009rrt, Noakes2007GeometryFR}. The problem of detecting boundaries from a set of points on the other hand has a much smaller solution set. Thus, the solution set is oversampled, making the algorithm conceptually inefficient. In fact, as the authors note: "Many of the paths are similar and thus lead to the same road boundary".
	
	\subsection{Lane detection}
	
	More commonly, lane detection is performed by detecting continuously marked lane boundaries.
	Typical lane detection algorithms deal with computer vision aspects such as invariance to lighting and noise, classical approaches such as \cite{WANG2000677, 4580573, 638604, kaske1997lane} use camera images and first detect edges, then use Hough voting to estimate the most likely parameters for splines modeling the boundaries. From multiple solutions, the most likely one is selected by handcrafted heuristics. A different approach is proposed in \cite{6856551} where the authors use a graph search in the image space to detect unstructured roads. They make use of geometric prior information such as the minimum and maximum lane width and combine it with other handcrafted heuristics.
	
	Deep-learning based approaches such as \cite{Zhang_2018_ECCV, qin2020ultra, Tabelini_2021_CVPR} use neural networks for either boundary detection or segmentation of the lane area.
	Geometric properties known a-priori like straightness and small lane width variance have been used to improve lane detection in \cite{9857364}. The idea of using geometric constraints to enforce geometrically sound lanes has been proposed in \cite{10172163} and incorporated as a post-processing step instead for making the search more efficient. Using handcrafted geometric features based on the variance has been used \cite{6594920} to select the most likely lane candidate via a binary classifier.

	\subsection{Path planning}
	
	Another related line of work are path planning algorithms used for mobile robots and autonomous vehicles. Path planning searches for a single path within a continuous search space (2D or 3D). This space may also be a traversable area, possibly with obstacles, and the task is to find a path from a starting point to an end point that avoids the obstacles while satisfying kinematic constraints \cite{Noakes2007GeometryFR}. In addition, the path planning algorithm usually optimizes for path length, curvature, etc. \cite{Gasparetto2015}. In \cite{10.1007/978-3-642-00196-3_8}, the authors propose an algorithm based on A* for real-time path planning in an urban environment, using heuristics-guided graph search to find kinematically feasible solutions and a subsequent smoothing step. In autonomous racing, path planning is approached via trajectory optimization given an already detected lane \cite{global-traj-optimization-tum}. 
	\subsection{Graph search techniques}

	For many combinatorial search and optimization problems with a prohibitively large solution set, specialized search techniques have been proposed.
	Search heuristics are used to find good locally optimal solutions, for this machine learning based methods using graph neural networks (GNNs) have been proposed in \cite{8968113} to improve graph algorithms. Even for fundamental graph problems it has been shown that learned heuristics can outperform handcrafted heuristics \cite{NEURIPS2018_8d3bba74}.
		
	\section{Overview} 
	
	\begin{algorithm}
		\caption{CLC: Lane detection in a map of 2D points.}
		\label{alg:clc}
		\begin{algorithmic}[1]
			\Require $\ConeMap$: set of 2D points representing the map, 
			$\CarPose$: pose of the car in map coordinate system
			\Ensure Most likely driving lane
			\Function{\code{CLC}}{$\ConeMap, \CarPose$}
			\State $\SearchGraph \leftarrow \code{construct\_search\_graph}(\ConeMap, \MaxConeDistance)$ \label{clc:construct-search-graph}
			\State $s_l, s_r \leftarrow \code{find\_starting\_vertices}(\ConeMap, \CarPose)$ \label{clc:find-starting-cones}
			\State \% Enumerate path pairs representing lanes (Alg. \ref{alg:enumerate_path_pairs_rec})
			\State candidates $\leftarrow \code{EPP}(\SearchGraph, (s_l, s_r))$ 
			\State \Return{\code{most\_likely\_candidate}(candidates)}
			\EndFunction
		\end{algorithmic}
	\end{algorithm}
	
	\subsection{Generative model of the map} 
	The perception system of the autonomous vehicle first builds a map of the environment. This map is the input to the lane detection algorithm and is typically obtained in 2D, as the height deviation is negligible. We assume the following generative model of the map. The map $\ConeMap$ consists of measured points $p_1, p_2, ..., p_N \in \Rtwo$ which are noisy. We assume that the points are perturbed by Gaussian noise. The measurement error which is typically achieved in such maps produced by LiDAR-sensors is 0.2m - 0.3m \cite{9341702}.
	
	Additionally, there may be false-positive points, i.e. not every point in the map may actually be present in the environment. We define the false-positive rate (FP-Rate) as the fraction of false-positive points of the total number of points in the map. If the point is a false-positive, we assume it is sampled from a uniform distribution in the range of the perception field. We measured values from 10\% to 30\% for the false-positive rate of our perception pipeline, which are similar to the 15\% - 25\% reported in \cite{9197098}.

	Every cone also has a color, indicating whether it belongs to the left or right boundary, but our algorithm does not use this information. We observed that in our perception pipeline, the color information is unreliable especially at large distances, and also wrong colors are strongly correlated, violating the independence assumption made in \cite{9341702}.
		
	\subsection{Problem statement and algorithm outline}
	Given a map of 2D points generated by the generative model, the problem of lane detection is to find two sequences of map points that represent the boundaries of the lane. Additionally, the algorithm should only output geometrically sound lanes, where soundness is defined by a set of constraints.
	As an additional input besides the map points, we use the pose of the car, which indicates the beginning of the lane and acts a hint to its direction. It is necessary to disambiguate which lane to detect, since several (partial) lanes may be visible on the map.
	
	Algorithm \ref{alg:clc} outlines the proposed approach. It receives the map points $\ConeMap$ and the car pose $\CarPose$ as an input. As the boundaries can be described naturally by two paths in a graph, we use a graph as the problem domain. After constructing the search graph, we enumerate feasible candidate solutions using path pair enumeration. In the second step, we select the most likely candidate. Our proposed algorithm is deterministic and also complete due to exhaustive enumeration.

	In the following sections, we describe each step of the proposed algorithm.
	
	\section{Graph search for lane detection}
	
	\subsection{Search graph construction}
	\label{sec:search-graph-construction}
	\begin{figure}[!t]
		\centering
		\begin{minipage}{0.49\textwidth}
			\begin{adjustbox}{width=\linewidth,center}
				\includegraphics{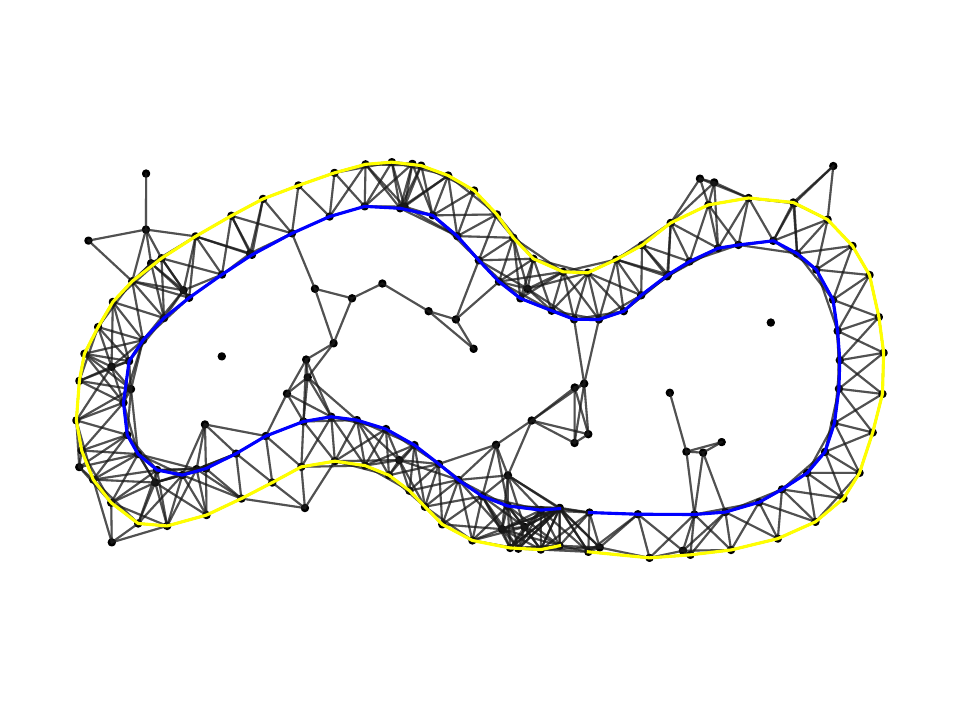}
			\end{adjustbox}
			\subcaption[second caption.]{Proposed search graph construction}\label{fig:search-graph-comparison-proposed}
		\end{minipage}\\
		\begin{minipage}{0.49\textwidth}
			\begin{adjustbox}{width=\linewidth,center}
				\includegraphics{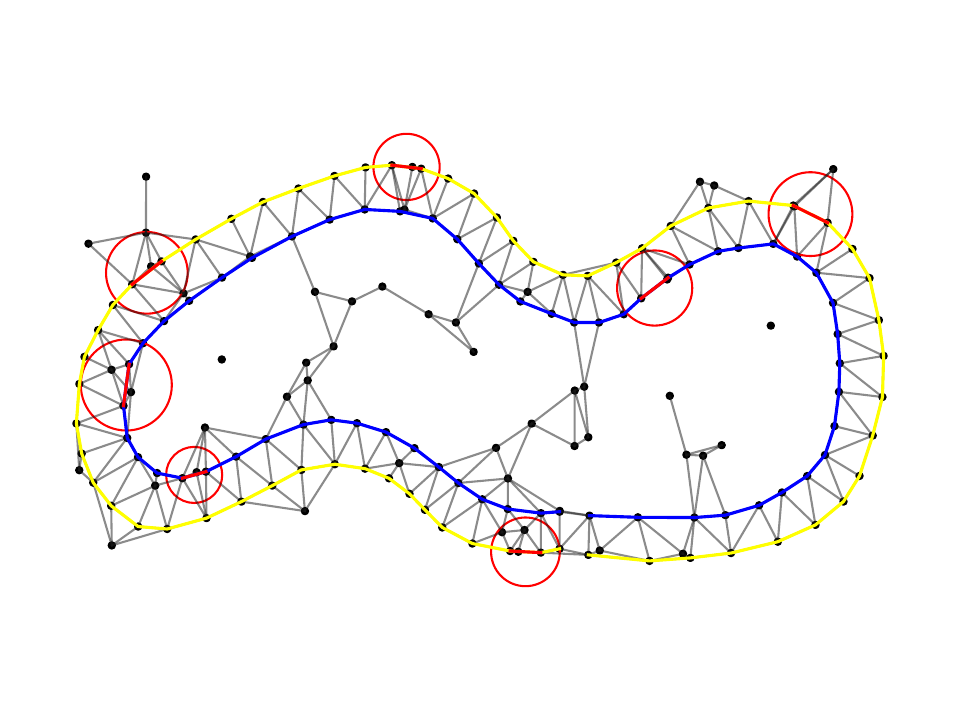}
			\end{adjustbox}
			\subcaption[first caption.]{Search graph construction using the Delaunay-triangulation}\label{fig:search-graph-comparison-delaunay}
			
		\end{minipage}%
		\caption{We propose to construct the search graph simply by adding an edge between all points closer than $d_{max}$ (e.g. 5m) (a). The Delaunay-triangulation (b) is an alternative way for constructing the search-graph, but in case of false-positives, the graph has missing edges (highlighted in red) that belong to the correct solution. Thus, it is not possible to find the correct solution in this search graph. (the false-positive rate shown here is 30\%).} 
		\label{fig:search-graph-comparison}
	\end{figure}
	
	In the first step, we construct a search graph from the map with the vertices representing every map point. Edges are inserted between every pair of vertices which can be connected in a boundary. A first constraint $\MaxConeDistance$ is used here, which states the maximum distance two points can have for a connection. Specifically for the FSD-competition, this constraint is stated in a rulebook \cite{fsgHandbook2023} to be 5m, i.e. two consecutive points of a boundary cannot be further away than 5m.
	The search graph $\SearchGraph = (\Vertices, \Edges)$ is therefore a simple and undirected graph constructed from the map $\ConeMap$ consisting of 2D-points $p_1, ..., p_N \in \Rtwo$. It is defined by its vertex- and edges-sets $V, E$ as:
	\begin{align}
		\Vertices &= \{1, ..., N\}\\
		\Edges &= \{ \{i, j\} \in \Vertices^2 \mid ||p_{i}- p_{j}||_2 \leq \MaxConeDistance, i \neq j \}
	\end{align}
	An example of a search graph constructed in this way is illustrated in Fig. \ref{fig:search-graph-comparison-proposed}.\\
	
	\subsection{Delaunay-triangulation for search graph construction} 
	\label{sec:do-not-use-delaunay}
	The Delaunay-triangulation has been previously proposed for construction of the search graph \cite{Kabzan2019AMZDT, alvarez2022software, 9352547}. It offers the advantage that it has fewer edges than our proposed approach. Using our proposed approach, the number of edges grows quadratically in the number of vertices. On the other hand, graphs constructed by triangulation have at most $3n -6$ edges for $n$ vertices, linear in the number of vertices \cite{delaunay-mesh-book}. However, the main disadvantage of the search graph constructed by triangulation is that it does not necessarily contain all edges of the correct driving lane in case of false positives (Fig. \ref{fig:search-graph-comparison-delaunay}). Thus, it is generally not possible to find the correct lane in this search graph, as it is not contained in it. In other words, the correct solution is immediately excluded by the construction of the search graph. We will show in section \ref{sec:exe-time-eval} that in practice, the theoretically lower number of edges does not lead to a significantly lower runtime of our algorithm. Our approach does not require the heuristic geometric plausibility provided by the search graph, since it explicitly uses geometric constraints.
	
	\subsection{Determining two starting vertices}
	To perform the graph search for lane detection, we determine initially two left and right starting vertices.
	Determining these vertices is a non-critical process and can be done via simple geometric rules which are implemented in the function \code{find\_starting\_vertices} \ref{clc:find-starting-cones}: Given the car pose in the map coordinate system, the two starting vertices  $s_l, s_r \in \Vertices$ have to be close to the car, thus we conduct a search within a maximum radius such as 2 m. Additionally, the left starting vertex must have a positive angle, the right a negative angle relative to the car heading vector. From multiple pairs that meet these criteria, we select the one that is the most symmetrical with respect to the line defined by the position and direction vector of the car.
			
	\subsection{Enumeration of path pairs}
	\label{enumeration}
	
	\begin{algorithm}[!h]
		\caption{Enumerate path pairs which satisfy constraints.}
		\label{alg:enumerate_path_pairs_rec}
		\begin{algorithmic}[1]
			\Require{$\SearchGraph$: adjacency list of the graph; $\PathPair$: current path pair, initially  $( (s_l), (s_r) )$ (containing only the starting vertices); $V$: pair of adjacency lists with visited vertices, initially $(\{\}, \{\})$; $i$: iteration counter, initially 0;
				$it_{max}$: iteration limit (default $it_{max} =\DefaultMaxIterations{}$)
			}
			\Ensure Set of path pairs which satisfy constraints
			\hypertarget{ref:epp-func2}{
				\Function{\code{EPP}}{$\SearchGraph, P, V, i$}}
			\State $\EnumeratedPathPairs \leftarrow \EmptySet$ \% Set of all found path pairs
			\Loop  \% For all adjacent vertices
			\If{$i \geq it_{max}$}
			\State \Return{$\EnumeratedPathPairs$}
			\EndIf
			\State $i \leftarrow i + 1$ 
			\State $c_s \leftarrow P[s].\textrm{back()} \textrm{ for } s \in \{0,1\}$ \% curr. verts.
			\State $v_s \leftarrow V[s][c_s] \textrm{ for } s \in \{0,1\}$ \% visited verts.
			\State \% Adjacent unvisited vertices 
			\State $u_s \leftarrow (\SearchGraph[c_s] \setminus v_s) \setminus P[s] \textrm{ for } s \in \{0,1\}$
			\If{$u_0 = \EmptySet \land u_1 = \EmptySet$} 
			\State \Return{$\EnumeratedPathPairs$} 
			\EndIf
			\State \% Choose next vertices for both sides
			\State $n_s \leftarrow \code{NVD}(P[s], u_s) \textrm{ for } s \in \{0,1\}$ \ref{sec:nvd-def}
			\If{$u_0 \neq \EmptySet \land u_1 \neq \EmptySet$} 
			\State $\NextSide \leftarrow \code{LRD}(P, n_0, n_1)$ \% Choose next side \ref{sec:lr-decider}
			\Else{ $\NextSide \leftarrow 1 \textrm{ if } u_0 = \EmptySet \textrm{ else } \NextSide \leftarrow 0$ }
			\EndIf
			\State $P[\NextSide].\textrm{push}(n_{\NextSide})$ \% Add next vertex to path
			\State $V[\NextSide][c_{\NextSide}].\textrm{add}(n_{\NextSide})$ \% Mark as visited
			\If{\code{CD}$(P)$} \% If lane satisfies constraints \ref{eq:constraint-decider-def}
			\State $\EnumeratedPathPairs \leftarrow \EnumeratedPathPairs  \cup \{P\} $
			\EndIf
			\If{$\lnot$\code{BTD}$(P, u_0 \neq \EmptySet, u_1 \neq \EmptySet)$} \ref{sec:backtracking-decider-btd}
			\State $\EnumeratedPathPairs \leftarrow \EnumeratedPathPairs  \cup \code{EPP}(G, P, V, i)$ 
			\EndIf
			\State $P[\NextSide].\textrm{pop()}$
			\EndLoop
			\EndFunction
		\end{algorithmic}
	\end{algorithm}
	
	Given two starting vertices, we can formulate the problem of finding lane candidates as path pair enumeration. We enumerate pairs of simple paths, i.e. a sequence of vertices where no vertex occurs more than once.
	First, we define the solution set as follows:
	\begin{definition}[Solution set of path pairs]
		Let the set of all paths in the graph $\SearchGraph$ starting from vertex $s_l$ be $\SetLeftPaths$ and starting from $s_r$ be $\SetRightPaths$.
		The solution set $\SolutionSet$ for driving lane detection is the set of path pairs $\SolutionSet := \SetLeftPaths \times \SetRightPaths$. The feasible solution set 
		$\FeasibleSolutionSet$ is $\FeasibleSolutionSet := \{P \in \SolutionSet \mid \ConstratintDecider(P) \}$.
		\label{def:solution-set}
	\end{definition}

	The solution set $\SolutionSet$ is reduced further by imposing constraints using the predicate $\ConstratintDecider(\cdot)$ which returns true if the lane represented by the path pair satisfies all constraints and false otherwise.
	
	Algorithm \ref{alg:enumerate_path_pairs_rec} enumerates the feasible set $\FeasibleSolutionSet$ of path pairs by using a modified depth-first-search (DFS) that simultaneously searches both paths and incorporates heuristics as well as constraint satisfaction checking. It takes worst-case time $\mathcal{O}(|V|!)$  ($\mathcal{O}(5^{2|V|})$ with planar graphs) and $\mathcal{O}(|V| \cdot |E|)$ space.
	Despite the prohibitive worst-case runtime complexity, in practice the algorithm commonly is able to fully enumerate $\FeasibleSolutionSet$ due to the drastic reduction of the solution set imposed by the geometric constraints, most paths are not visited as the search backtracks early.
	
 	In general, we could also enumerate $\SetLeftPaths \times \SetRightPaths$ by enumerating $\SetLeftPaths$ and $\SetRightPaths$ separately (searching one path at a time) for which we would not need a modified DFS.
	However, as we mentioned earlier, our enumeration process is also a local optimization step where we prioritize finding the local optimum. If we think of the problem as an optimization over two dimensions ($\mathcal{L}$ and $\mathcal{R}$), enumerating only one set of paths at a time corresponds to a search over only one dimension at a time. Since the optimal solution could be anywhere in this 2D space, searching over only one dimension at a time for the optimal solution clearly is not an efficient optimization method. Instead, multivariate optimization methods typically use a search direction (e.g. the gradient vector), leading to an efficient search for local minima.
	The same idea applies to our algorithm -- by searching the two paths and thus over the two dimensions simultaneously, we can direct the search and find locally optimal solutions efficiently. The search direction is provided by a heuristic which indicates whether to continue searching further the left or the right path, it is named the \textit{Left-right-decider} and explained further in section \ref{sec:lr-decider}.
	
	Algorithm \ref{alg:enumerate_path_pairs_rec} begins by initializing an empty solution set (line 2) and then loops over all adjacent vertices (line 3). It obtains the last vertices of both paths (line 7) (which are labeled "current" as they are the current ones of DFS). It retrieves the already visited adjacent vertices of these current left and right vertices (line 8), as well as the unvisited ones which additionally are not already in the path (line 10). It then decides via a heuristic which unvisited adjacent vertices to visit next (line 14). The function returns when all have been visited for both paths (line 12). After choosing the next vertices, the \textit{side} variable $\NextSide$ is chosen which indicates whether to continue searching further the left or right path. We use 0 for left and 1 for right as a convention. To not exclude paths, the next side is chosen by the heuristic (line 16) only if both paths have unvisited adjacent vertices (line 15). If all adjacent vertices have been visited for one path and therefore only the other path has unvisited vertices, the path with the unvisited vertices is chosen as the next one to be searched (line 17). After adding the next vertex (line 18), the path pair is added to the solution set (line 21) if it satisfies the constraints (line 20). Then, the \textit{backtracking-decider} $\code{BTD}$ decides whether the constraints can not be satisfied anymore if they are violated (line 22). In such a case, the algorithm backtracks. Otherwise, the function is called recursively to continue the search (line 23). Note that the iteration counter $i$ has to be passed by reference unlike the other arguments to ensure the iterations are counted correctly.\\
	
	\subsection{Lanes for a closed course}
	Algorithm \ref{alg:enumerate_path_pairs_rec} does not enumerate closed driving lanes -- in order to be able to represent the racetrack as a closed course, a \textit{closing} is attempted as a post-processing step on each path pair found which closes both paths. A candidate can be closed if there are edges between the first and the last vertex of both the left and the right path. Also, the constraints must remain satisfied after closing. If such a closing is possible, the closed path pair is added to the solution set as an additional candidate.
	
	\subsection{Warmstarting}
	The enumeration of candidate solutions still suffers from combinatorial explosion limiting the search depth. In practice, this hinders detecting a complete racetrack which has an average length of 350m. The search depth achieved instead is in many cases below 100m (see section \ref{sec:enumeration-evaluation} for an evaluation).
	The solution to this problem is to reuse the most likely driving lane found in the last run of the algorithm for starting the enumeration, i.e. warmstarting. The warmstarting causes the enumeration algorithm to start with the most likely solution found (i.e. after running $\code{CLC}$ \ref{alg:clc} once). Here we use the fact that the map is initially not fully visible, the visible lane is commonly below these 100m which allows the algorithm to still detect the complete lane. Warmstarting causes the lane to be constructed incrementally over only a few runs of $\code{CLC}$ for many race track layouts.
	Overall, warmstarting thus enables a low-latency and real-time lane detection and at the same time completes the detected lane as the map is built. This allows for global race line optimization for minimizing the lap time \cite{min-time-opt}.\\

	\section{Search heuristics}
	
	In this section, we explain in detail how the enumeration algorithm \ref{alg:enumerate_path_pairs_rec} uses heuristics to guide the graph search, in particular we define the functions \code{LRD} and \code{NVD}.
	The two heuristics used ensure that during the limited number of iterations, locally optimal lane candidates are searched first. 

	\subsection{Next-vertex-decider} 
	\label{sec:nvd-def}
     \begin{figure}[!tbh]
         \begin{adjustbox}{width=.99\linewidth,center}
	       \includegraphics{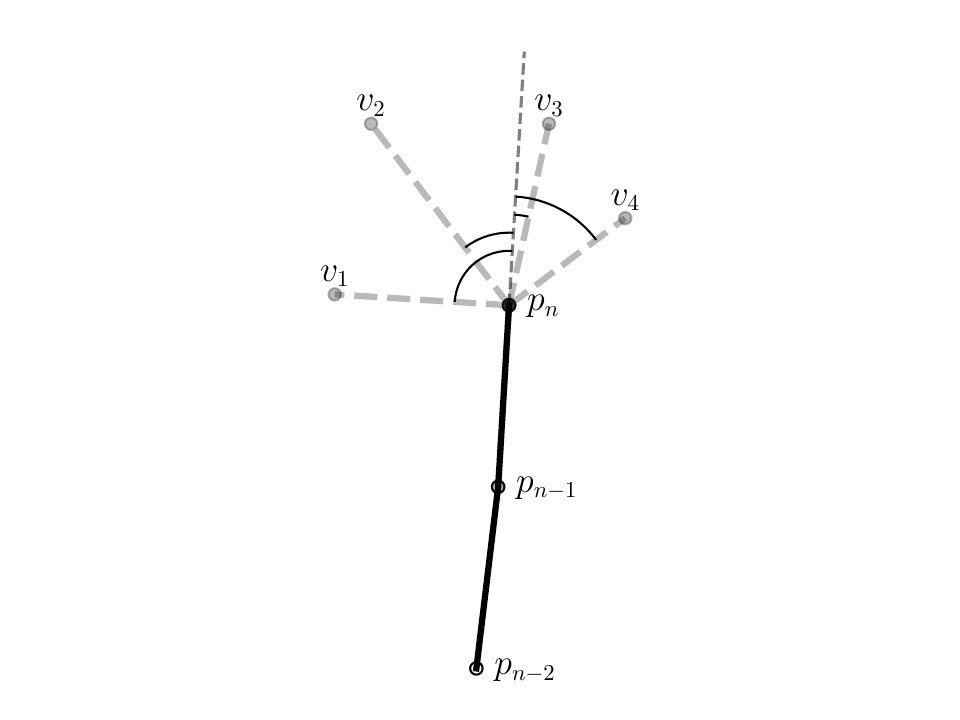}
	   \end{adjustbox}
	     \caption{Next-vertex decider: For the last point $p_n$ of either the left or right boundary we choose an adjacent vertex ($v_1, ..., v_4$) to visit next. In this example, $v_3$ is chosen as the next vertex adjacent to $p_n$ since the absolute angle between the line segments $\overline{p_{n-1} p_n}$ and  $\overline{p_n v_3}$ is the smallest from all segments to possible adjacent vertices.}
	     \label{fig:nvd}
	\end{figure}
	The next-vertex-decider selects the best adjacent vertex to visit next. It is always queried for the last vertex of both the left and right path.
	The idea of this heuristic is to predict straight lines based on the current direction, i.e. to extrapolate the boundary direction.
	For this, it considers angles of consecutive line segments of one boundary.
	The last vertex of a path corresponds to the last point $p_n$ of a polygonal chain $P = (p_1, ..., p_n)$ and has unvisited adjacent vertices $\mathcal{U}$. For every adjacent vertex $u \in \mathcal{U}$, we compute the absolute angle between the last segment $\overline{p_{n - 1}p_{n}}$ of the boundary and the segment $\overline{p_{n}p_{u}}$ from $p_n$ to $p_u$, the point corresponding to the unvisited vertex $u$. We then select the adjacent vertex which minimizes this angle (Fig. \ref{fig:nvd}).
	Consequently, the best next vertex for the path $P$ is chosen from the set of vertices $\mathcal{U}$ by the next-vertex decider \code{NVD} with:
	\begin{equation}
		\begin{aligned} 		 	
			\code{NVD}(P, \mathcal{U}) &= \argmin_{{u} \in \mathcal{U}}{ |\angle(\overline{p_{n - 1}p_{n}}, \overline{p_{n} p_{u}})|}
		\end{aligned}
		\label{eq:next_vertex}
	\end{equation}
	This is clearly a greedy heuristic which considers both boundaries independently and therefore may make suboptimal choices for the overall lane. 
	Despite the simplicity of the approach, it proves to be effective and is also computationally cheap. Note that we use memoization to increase the execution speed as this heuristic is usually queried multiple times for the same vertex.
	
	Initially, the polygonal chain is only a single point and therefore the segment  $\overline{p_{n - 1}p_{n}}$ is not defined. In this case we use the heading vector of the car instead of this segment. This is necessary as the search direction is ambiguous otherwise -- in case of a closed racetrack, the lane can either start in the car heading direction or in the direction opposite to it.
	
	\subsection{Left-right-decider}
	\label{sec:lr-decider}
	
	\begin{figure}[!tbh]
		\centering
		\begin{minipage}{0.49\linewidth}
			\begin{adjustbox}{width=\linewidth,center}
				\includegraphics{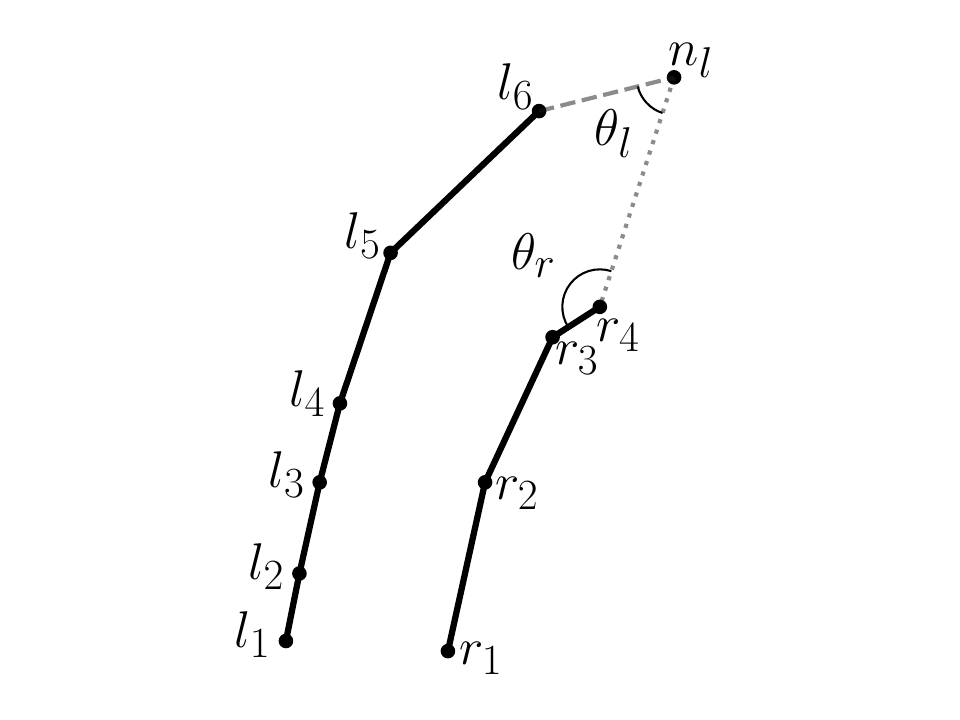}
			\end{adjustbox}
			\subcaption[second caption.]{Potential next vertex for the left path.}\label{fig:lr-decider-left-possible}
		\end{minipage}
		\begin{minipage}{0.49\linewidth}
			\begin{adjustbox}{width=\linewidth,center}
				\includegraphics{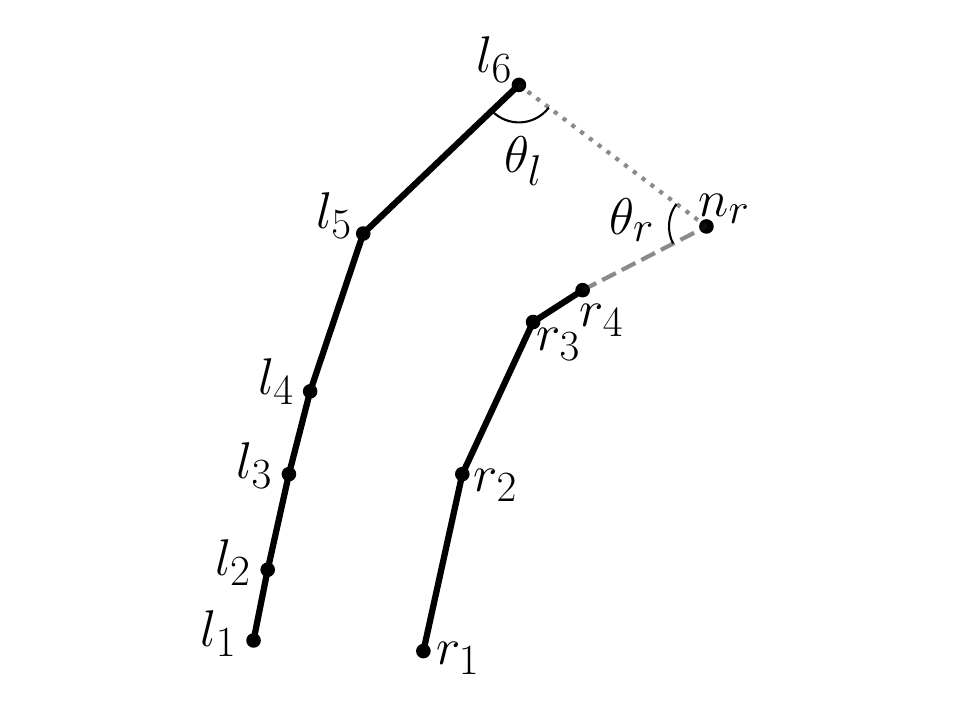}
			\end{adjustbox}
			\subcaption[first caption.]{Potential next vertex for the right path.}\label{fig:lr-decider-right-possible}
		\end{minipage}
		\caption{Left-right-decider: The left-right decider decides during the enumeration of path pairs whether a next vertex should be added to the left path (a) or to the right path (b). It is a geometric heuristic that tries to keep both paths equally \textit{advanced} by minimizing the deviation $|\theta_l - \theta_r|$. In this example, selecting the right vertex leads to a smaller deviation, therefore the left-right decider chooses the right side and the path pair becomes (b) in the next iteration.} 
		\label{fig:lr-decider-comparison}
	\end{figure}	
	
	The Left-right-decider $\code{LRD}$ is a heuristic that determines during the path pair enumeration whether it is best to continue searching the left or the right path.
	The heuristic is based on the geometric rule to try to maintain both polygonal chains (which correspond to the paths) equally \textit{advanced} (Fig. \ref{fig:lr-decider-comparison}). The previously described \NextVertexDecider-heuristic chooses two adjacent vertices $n_0$ (left) and $n_1$ (right), $\code{LRD}$ chooses then which one of them to add to the path pair\footnote{Note that we cannot simply add both vertices to the path pair at the same time, doing so would result in always enumerating path pairs of equal length which is only a subset of the solution set defined in Def. \ref{def:solution-set}.}. It outputs the side $\NextSide$ whether to add a vertex to the left or right path, the path on the other side remains unchanged. Therefore, if we would decide extending the left path and adding $n_0$, we obtain the path pair $P'_1 = ((l_1, ..., l_n, n_0), (r_1, ..., r_m))$, whereas if we decide for the right path and add $n_1$, we obtain the path pair $P'_2 = ((l_1, ..., l_n), (r_1, ..., r_m, n_1))$.
	$\code{LRD}$ is now simply a greedy heuristic which chooses the side that leads to the more likely one of these two path pairs.
	
	The heuristic works by comparing both potential path pairs $P'_1$ and $P'_2$ by computing two angles. It computes the two angles between the last line segments of both polygonal chains and the line segment connecting the last points of both polylines. The two angles $\theta_l$ and $\theta_r$ are defined given the left and right polygonal lines $(l_1, ..., l_n), (r_1, ..., r_m)$ as:

	\begin{equation}
		\begin{aligned}
			\theta_l =\angle(\overline{l_{n-1} l_n}, \overline{l_n r_m})\\
			\theta_r = \angle(\overline{r_{m-1} r_m}, \overline{r_m l_n})
		\end{aligned}
	\end{equation}

	The more equally advanced both polylines are, the more similar the two angles $\theta_l$ and $\theta_r$ are ($|\theta_r - \theta_l|$ is smaller).
	These two angles are computed for both lanes $P'_1$ and $P'_2$, for $P'_1$ they are $\theta_l^1, \theta_r^1$, for $P'_2$ they are $\theta_l^2, \theta_r^2$. 
	$\code{LRD}$ is a function which chooses the side corresponding to the lane with smaller $|\theta_r - \theta_l|$: 
	\begin{equation} 		 	
			\code{LRD}(P, n_0, n_1) = \begin{cases}
				0\textrm{ (left) } & |\theta_r^1 - \theta_l^1| < |\theta_r^2 - \theta_l^2| \\
				1\textrm{ (right) } & \, \textrm{otherwise} \\
			\end{cases}
		\label{eq:lrd}
	\end{equation}
 	For the special case in which initially both polygonal lines have zero line segments, an arbitrary side may be chosen.
 	Note that another special case occurs when the constraint decider may determine that the current lane does not satisfy the constraints and they can only be satisfied by extending a specific boundary (either left or right). In such a case the LR-decider simply chooses this side which can lead to constraint satisfaction. In such a case, the heuristic therefore tries to leave the unfeasible region as fast as possible.
	
	\section{Geometric lane constraints}
	In this section, we define the geometric constraints imposed on lanes as well as the two functions $\code{CD}(\cdot)$ and $\code{BTD}(\cdot)$ (see \ref{sec:backtracking-decider-btd}) used by algorithm \ref{alg:enumerate_path_pairs_rec}.
	
	We impose the following constraints:
	\begin{enumerate}
		\item Two points of the same boundary have a maximum distance $\MaxConeDistance$ ($\MaxConeDistance = 5.5m$)
		\item $\ConstraintSegments$: The absolute angle between two consecutive line segments of a boundary does not exceed a maximum $\MaxSemgentsAngle$ ($\MaxSemgentsAngle = 90\degree$)
		\item $\ConstraintSimplePoly$: The driving lane polygon is simple, i.e. does not intersect itself
		\item $\ConstraintLaneWidth$: The driving lane width is above $\MinLaneWidth$ and below $\MaxLaneWidth$ ($\MinLaneWidth = 2.5m$ and $\MaxLaneWidth = 6.5m$)
	\end{enumerate}
	
	The predicate indicating constraint satisfaction simply requires the satisfaction of all constraints:
	
	\begin{equation}     
		\label{eq:constraint-decider-def}         
		\begin{aligned}
			\ConstratintDecider(P) = \ConstraintSegments(P) \land \ConstraintLaneWidth(P) \land \ConstraintSimplePoly(P)
		\end{aligned}
	\end{equation}
	
	These constraints guarantee the output of a geometrically sound lane. One major consideration of these constraints is whether, once they are violated, they can be satisfied anymore by extending the lane. One insight into the efficiency of our lane detection algorithm is that in most cases, when a constraint is violated, it cannot be satisfied anymore. This allows for pruning the search space by backtracking.
	Note that, specific to the FSD competition, some of these constraints are explicitly mentioned in a rulebook, for example the maximum distance between two points, as well as the lane width constraint \cite[pp. 14-16]{fsgHandbook2023}\footnote{Note that, although the rulebook only states a minimum lane width, in practice we can also assume an upper bound.}.
	Recall that the first constraint is already satisfied by the graph construction (see \ref{sec:search-graph-construction}) and it therefore does not need to be checked by $\ConstratintDecider(\cdot)$. All other geometric constraints must be checked for every new path pair $P$. For this, we use efficient online algorithms that only perform computations that have not been done before.
	
	The constraints $\ConstraintSegments$ and $\ConstraintSimplePoly$ can be ensured as follows: For $\ConstraintSegments$, each time a new vertex and thus a new segment is added to the polygonal chain, we compute the absolute angle to the previous segment. If it is below $\MaxSemgentsAngle$ or there is no previous segment, the constraint $\ConstraintSegments$ is satisfied. For $\ConstraintSimplePoly$, we define the polygon of the driving lane by the sequence of points of the left boundary concatenated with the points of the right boundary in reversed order. To check whether it intersects itself, for each new segment we check for intersection with any other segment. If no segments intersect any other, the constraint is satisfied.
	
	In the following subsection, we will focus on the lane width constraint $\ConstraintLaneWidth$. It requires defining the width of the lane bounded by two polygonal chains, which is rather non-trivial and is also further complicated by the requirement of fast online-algorithms.

	\subsection{Driving lane width}
		
	In this subsection we outline the proposed approach for efficiently calculating the lane width bounded by two polygonal chains. 
	The general idea of the lane-width calculation is to establish a set of corresponding points which lie on the left and right boundary and then define the width as the Euclidean distance between them. The main task is therefore to establish these correspondences.
	
	The distance between two polygonal chains is a well studied topic in computational geometry with approaches such as the \textit{Fr{\'e}chet distance} \cite{Alt1995ComputingTF, locally-correct-frechet-matchings}. Another application is morphing between polygonal curves for which a \textit{width} between the curves has been considered explicitly in \cite{morphing-polylines-efrat}. All these approaches work by establishing corresponding points between both curves with reparametrization functions, named \textit{matchings}. The Fr{\'e}chet matching (leading to the Fr{\'e}chet distance) however is not unique and not every solution is suitable as a calculation of the width. A solution to the problem of uniqueness was proposed in \cite{locally-correct-frechet-matchings} where \textit{locally correct} Fr{\'e}chet matchings were considered, but the algorithm is difficult to implement and would likely be too slow for our application.
	
	We propose therefore a new algorithm for lane width calculation which meets the  following criterias: 
	
	\begin{enumerate}
		\item It is a natural way for measuring the lane width 
		\item Efficient online algorithms for computation can be derived
		\item It allows for an analysis whether the lane width constraint can be satisfied anymore once it is violated. 
	\end{enumerate}

	The third criterion states whether and when once the lane width is outside of bounds and thus the constraint is violated, it can be satisfied again by extending the lane. This is important as it allows the usage of backtracking when the constraint cannot be satisfied anymore and is thus essential for the efficiency of the graph search algorithm.
	
	We first make some preliminary definitions. 
	The length of a polygonal chain with $N$ points is defined as the sum of the length of its line segments:
	\begin{align}
		\label{boundary_len_eq}
		l_p = \sum_{i=1}^{N-1}||p_{i+1} - p_i ||_2
	\end{align}

	One way of describing a polygonal chain is as a curve $[0, l_p] \rightarrow \mathbb{R}^2$, mapping the \text{progress} along the curve to a 2D-point.
	We will use the slightly different parametrization for a polygonal chain where the domain depends on the number of points rather than the length of the curve: $P: [0, |P| - 1] \rightarrow \mathbb{R}^2$, where $|P|$ is the number of points of the polygonal chain. To define $P(x)$, we split $x$ in $x = i + \lambda$, the integer part $i \in \{0, 1, ..., |P| - 1\}$ which is simply the point index. The fractional part $\lambda \in [0, 1[$ states the progress along the $i$-th line segment. Given the sequence of points $(p_0, ..., p_{|P| - 1})$ of the polygonal chain, we define $P(i + \lambda) = (1 - \lambda)p_i + \lambda p_{i + 1}$. The set of points $P(x)$ for $x \in [i, i+1]$ denotes the i-th segment of $P$.
	
	We use this parametrization for both the left and right boundary, $\LeftSpline: [0, |\LeftSpline| - 1] \rightarrow \mathbb{R}^2$ and $\RightSpline: [0, |\RightSpline| - 1] \rightarrow \mathbb{R}^2$.
	We define the sequence of \textit{matching points} as $M_{ps} = {\{(u_i, v_i) \in [0, |L| - 1 ] \times [0, |R| - 1] \}_{i=0}^{|M|-1}}$, consisting of pairs of parameters for the left and right polygonal chains \cite{locally-correct-frechet-matchings}. 
	The sequence ${\{ (L(u_i), R(v_i)) \mid (u_i, v_i) \in M_{ps}(L, R)  \}_{i=0}^{|M_{ps}|-1}}$ is a sequence of endpoints of line segments which we call the \textit{matching lines}, the length of these segments is the driving lane width (see Fig. \ref{fig:lane-width-calc}).
	
	\subsubsection{Establishing a matching}
	The idea for an efficient algorithm calculating these matching points is to use a nearest-neighbor search (NN-search) for closest line segments between the left and right polygonal chain. To ensure that every vertex is matched as well, we also perform nearest neighbor search with vertices as a query. We use analytic formulas for calculating the shortest line segment between two line segments.
	
	First, we define the search space $\Omega_{i}(k,s)$ for arguments $a$ and $b$ to both polygonal chains. It depends on whether we use the $i$-th vertex ($k=0$) or the $i$-it segment ($k=1$) as a query and whether
	we use the left polygonal chain as a query with the right one as target ($s=0$) or vice-versa ($s=1$):
	
	\begin{equation}
		\begin{aligned}
			\Omega_{i}(k,s) &= 
				\begin{cases}
					\{i\} \times [0, |\RightSpline| - 1] \textrm{ if } s = 0, k = 0 \\
					[i, i+1] \times [0, |\RightSpline| - 1] \textrm{ if } s = 0, k = 1 \\
					[0, |\LeftSpline| - 1] \times \{i\} \textrm{ if } s = 1, k = 0 \\
					[0, |\LeftSpline| - 1] \times [i, i+1] \textrm{ if } s = 1, k = 1 \\
				\end{cases}
		\end{aligned}
	\end{equation}
	
	This search space can be interpreted as follows: The nearest-neighbor search always has as a target domain the complete polygonal chain, i.e. the intervals $[0, |\LeftSpline| - 1]$ and $[0, |\RightSpline| - 1]$. The query for the NN-search can either be a single line segment of the polygonal chain ($[i, i+1]$), or a single point, i.e. $\{i\}$, for $i$ being an integer.
	
	We define then the nearest-neighbor search as yielding the matching points $M(\LeftSpline, \RightSpline, s, k)$ between $\LeftSpline$ and $\RightSpline$:
	\begin{equation}
		\begin{aligned}
			M(\LeftSpline, \RightSpline, k, s) &=\\ \biggl\{(u_i, v_i) &= \argmin_{ (a, b) \in \Omega_{i}(k,s)} || \LeftSpline(a) - \RightSpline(b) ||_2 \biggl\}_{i=0}^{N_m(k, s) - 1}\\
			\label{eq:lane-width-nn-search}
		\end{aligned}
	\end{equation}
	
	$N_m(k, s)$ is the number of matching points, it is defined as:
	\begin{equation}
		\begin{aligned}
			N_m(k, s) &= 
			\begin{cases}
				|\LeftSpline| - k \textrm{ if } s = 0\\
				|\RightSpline| - k \textrm{ if } s = 1\\
			\end{cases}
		\end{aligned}
	\end{equation}
	
	When using segments as a query ($k=1$) instead of vertices ($k=0$), there is one matching point less due to a polygonal chain having one segment less than it has points, this is why we subtract $k$. 
	Also, depending on whether the polyline $\LeftSpline$ or $\RightSpline$ is the query, the number of matching points depends either on $|\LeftSpline|$ or on $|\RightSpline|$. 
	
	The proposed method for computing the matching points $\MatchinPoints$ between the two boundaries $\LeftSpline$ and $\RightSpline$ is by combining nearest points-to-segments and segments-to-segments and using $\LeftSpline$ and $\RightSpline$ as query as well as a target:
	\begin{equation}
		\begin{aligned}
			\MatchinPoints(\LeftSpline, \RightSpline) &= \bigcup_{s, k \in \{0, 1\}^2}{M(\LeftSpline, \RightSpline, s, k)}
		\end{aligned}
		\label{eq:matching-points}
	\end{equation}
	
	The lane width $\LaneWidth$ between the two boundaries $\LeftSpline, \RightSpline$ is defined simply as the Euclidean distance between corresponding points:
	\begin{equation}
		\begin{aligned}
			\LaneWidth(\LeftSpline, \RightSpline) = \{ || \LeftSpline(u_i) - \RightSpline(v_i) ||_2, (u_i, v_i) \in \MatchinPoints(\LeftSpline, \RightSpline) 	\}
		\end{aligned}
		\label{eq:lane-width}
	\end{equation}
	
	The lane width constraint predicate $\ConstraintLaneWidth{}(P)$ requires that the lane width is everywhere above $\MaxLaneWidth$ and below $\MinLaneWidth$:
	
	\begin{equation}
		\label{eq:lane-width-constraint-def}
		\begin{aligned}
		\ConstraintLaneWidth(P) = \bigwedge_{ w_i \in \LaneWidth(L, R)} \MinLaneWidth < w_i < \MaxLaneWidth 
		\end{aligned}
	\end{equation}
	
	\subsubsection{Online-algorithm for lane width calculation}
	\begin{figure}[!h]
		\begin{adjustbox}{width=.49\linewidth}
			\includegraphics{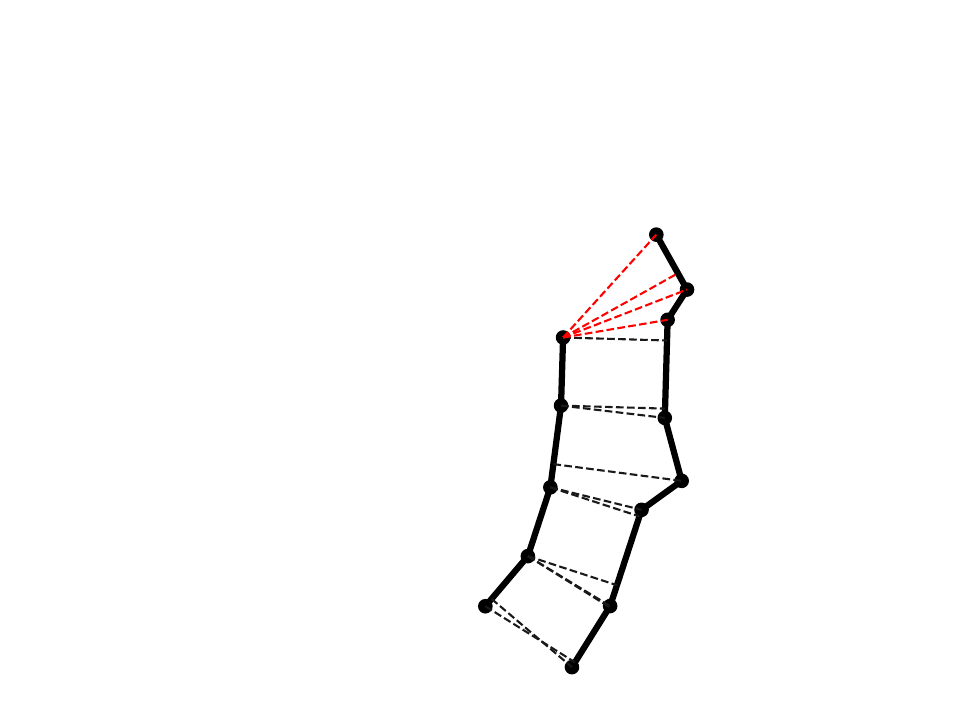}
		\end{adjustbox}
		\begin{adjustbox}{width=.49\linewidth}
			\includegraphics{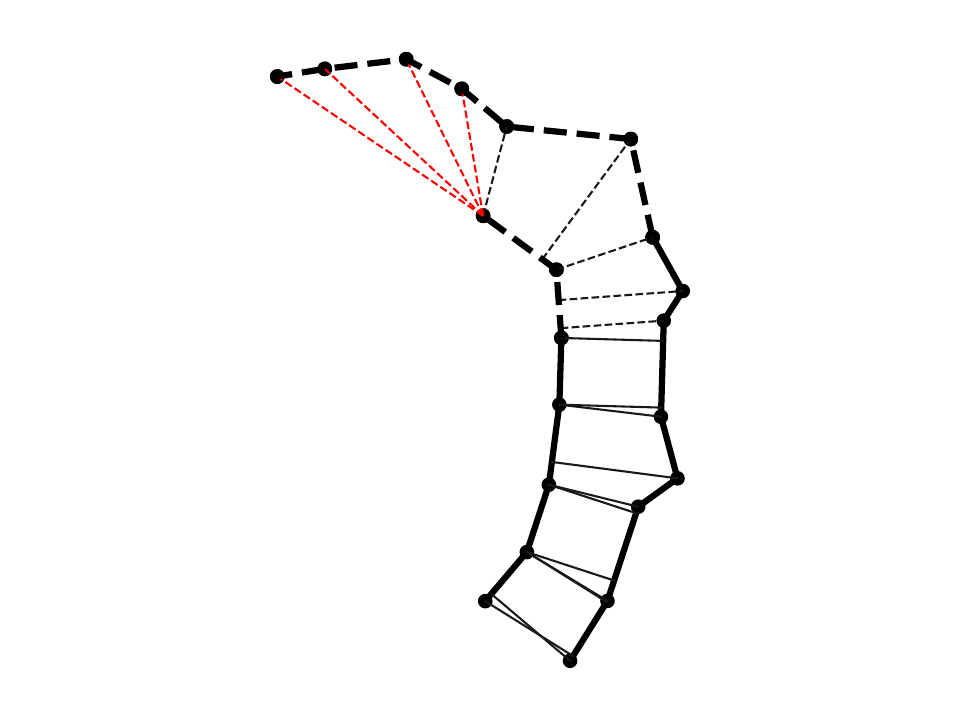}
		\end{adjustbox}
		\caption{Left: To obtain the driving lane width, \textit{matching lines} (thin dashed lines) are computed between the two lane boundaries (solid lines). The lengths of these matching lines define the driving lane width. The online algorithm may recompute all the \textit{mutable} matching lines (red) which may overestimate the lane width. Right: After extending the boundaries, new matching lines are computed (dashed), replacing the mutable ones. The set of \textit{fixed} matching lines (black dashed and solid) is only extended, the fixed matching lines from the previous iteration (black solid) are never recomputed.} 
		\label{fig:lane-width-calc}
	\end{figure}
	\begin{algorithm}
		\caption{Online algorithm for lane width calculation}
		\label{alg:online-lane-width}
		\begin{algorithmic}[1]
			\Require $\LeftSpline, \RightSpline$: two lane boundaries (polygonal chains), 
			$M_{fixed}^{t-1}$: fixed matching points from last iteration, sorted
			\Ensure $M_{fixed}^{t}, M_{mut}^{t}$: Updated fixed and mutable matching points
			\Function{OnlineLW}{$\LeftSpline, \RightSpline, M_{fixed}^{t-1}$}
			\If{$M_{fixed}^{t-1} = \EmptySet$}
			\State $u_s, v_s \leftarrow (0, 0)$
			\Else 
			\State $u_s, v_s \leftarrow M_{fixed}^{t-1}.\textrm{back}()$  
			\EndIf
			\State $M' \leftarrow \textrm{compute matching for } L, R$ starting  from $ u_s , v_s $ (Eq. \ref{eq:matching-points})
			\State sort $M'$ and split into $M_{fixed}'$ and $M_{mut}^{t}$
			\State $M_{fixed}^t \leftarrow M_{fixed}^{t-1} \cup M_{fixed}'$
			\State \Return{$M_{fixed}^{t}, M_{mut}^{t}$}
			\EndFunction
		\end{algorithmic}
	\end{algorithm}
	
	Instead of applying formula \ref{eq:matching-points} every time for every new pair of boundaries, we propose the following online-algorithm (Alg. \ref{alg:online-lane-width}). The idea is to divide the set of matching points into two sets, the set of \textit{fixed} matching points which are never recomputed but instead only new ones may be added. The second set of \textit{mutable} matching points may be recomputed each time the boundaries are extended (Fig. \ref{fig:lane-width-calc}). This split into two sets not only allows for greater efficiency but also for backtracking. If the lane width does not satisfy constraints and the violating matching point belongs to the set of fixed matching points, we can backtrack since this matching will not be recomputed. See lemma \ref{lemma:constraint-lw-bt-criterion} for a detailed analysis of the backtracking criteria.
	
	We now describe in detail Algorithm \ref{alg:online-lane-width}.
	The computation of matching points starts from the last matching point from the previous iteration (line 5), or the beginning of the boundaries in case it is the first iteration (line 3).
	After performing the nearest-neighbor search for matching points (line 6), the set of computed matching points is split into the fixed and mutable ones based on the first matching point which matches at least one end of either boundary (i.e it is either $(|L| - 1, v)$ or $(u, |R| - 1)$). All matching points which come before this matching point are fixed, all which come after it are mutable.
	For this, an order of matching points needs to be established. 
	Since the search in \ref{eq:matching-points} does not guarantee the matching points to be monotone, i.e. $(u_i \leq w_i \land v_i \leq x_i) \lor (u_i \geq w_i \land v_i \geq x_i) \, \forall ((u_i, v_i), (w_i, x_i)) \in \MatchinPoints^2$ generally does not hold, we establish a lexicographic order. 
	
	After splitting the computed matching points and sorting, the sequence of new fixed matching points $M_{fixed}'$ (which may be empty) is appended to the ones from the previous iteration  $M_{fixed}^{t-1}$ (line 8), which maintains the order. The constraint Eq. \ref{eq:lane-width-constraint-def} is evaluated over both sets of matching points, i.e. $M_{fixed}^{t}$ and $M_{mut}^{t}$.
	
	\subsection{Backtracking-decider}	
	\label{sec:backtracking-decider-btd}

	Note that, most of the time, the geometric constraints can no longer be satisfied once they have been violated, i.e. the graph search can immediately backtrack. However, a backtracking-decider $\code{BTD}(\cdot)$ is still required because there are some exceptions to this rule. It returns true if we can backtrack and false it not. It is defined as follows. $\code{BTD}(\cdot)$ generally returns false if no constraints are violated, i.e. ~$\lnot \ConstratintDecider(P) \implies \lnot \BacktrackingDecider(P, v_l, v_r) \, \forall P \in \SolutionSet, v_l, v_r \in \{\textrm{true}, \textrm{false}\}^2$. In all other cases it returns true except in the ones where the constraints $\ConstraintSimplePoly$ and $\ConstraintLaneWidth$ still can be satisfied (see lemmata \ref{lemma:polygon-simplicity-bt-criterion} and \ref{lemma:constraint-lw-bt-criterion}), in which it returns false.

	\section{Ranking of candidate driving lanes}
	In this section we outline the second essential step of our algorithm $\code{CLC}$ \ref{alg:clc} after candidate solutions have been enumerated, which is choosing the most likely candidate via ranking. This task is challenging as the only measurements indicating the likelihood of a lane are the boundary points and there can be ambiguities. One simple but effective heuristic which works well in case of no false-positives is to simply select the lane with the greatest length.
	Other criteria are the curvatures of the two boundaries (which should not exceed a maximum) and the similarity between the two boundaries (which should be high). These geometric criteria are however subtle and tuning handcrafted heuristics exploiting them can be difficult.
	
	We propose therefore for this pattern recognition problem the usage of a supervised machine learning approach. Instead of assigning numerical likelihood values to each candidate lane, we use the formulation as a ranking-problem where a binary classifier chooses one candidate over another (pair-wise ranking) as more likely. This is assumed to improve generalizability over directly predicting a likelihood in case the lane length varies a lot.
	
	The input to the neural network is an 8-dimensional feature vector derived from the driving lane. We use as features (1) the lane length (computed as the mean of the length of the two boundaries) and the number of points of the left (2) and right (3) boundary. As an estimation of the geometric implausibility, we compute the variance of the lane width (4). Also, we compute the variance of line segment lengths and the angles between consecutive line segments for both boundaries (5, 6 and 7, 8 respectively). We also tried adding the respective mean values, but no improvement was observed. These are clearly hand-crafted features, but parameter-free and thus removing overall the need for hand-tuning parameters. 
	
	The model used is a 2-layer fully-connected neural network inspired by \textit{RankNet} \cite{Burges2005LearningTR} which is a lightweight architecture suitable for real-time inference. 
	It has in the first layer $8 \cdot 100$ parameters and in the second layer $100 \cdot 1$ parameters, using ReLU as activation function.
	Similar to the RankNet-approach, the predicted output probability $p_{pred}$ is computed from the two feature vectors $x_1$ and $x_2$ with $p_{pred} = \textrm{sigmoid}(\textrm{Net}(x_1) - \textrm{Net}(x_2))$ where $\textrm{Net}(\cdot)$ is the two-layer neural network. Binary-cross-entropy-loss is used as a loss function, for which the ground-truth probability $p_{GT}$ is derived from the intersection over union (IoU)-values between a candidate lane polygon and the ground-truth lane polygon. For two candidates, this yields the two values  $\textrm{IoU}_{1}, \textrm{IoU}_{2}$, $p_{GT}$ is defined then as:
	\begin{align}
		p_{GT} = \textrm{sigmoid}(\lambda_{\textrm{iou}} (\textrm{IoU}_{1} - \textrm{IoU}_{2}))
		\label{eq:gt-prob-pairwise-ranking}
	\end{align}
	The factor $\lambda_{\textrm{iou}}$ is required to increase the IoU-difference as otherwise training happens very slowly. It is a non-critical parameter, in our experiments values from $\lambda_{\textrm{iou}}=10$ to $\lambda_{\textrm{iou}}=150$ worked equally well.
	The dataset used for training consists overall of lists of enumerated candidates each time the lane detection runs. The lane detection runs continuously as the map of the environment is build and therefore
	has to work reliably over all partial (potentially noisy) maps.
	To include only significant map changes, we use as a criterion the traveled distance of the vehicle which has to be above 1m. To construct the training dataset for the ranking problem, we enumerate candidates for every such partial map which yields candidate lists.
	We then pair candidates and compute the ground-truth probability with Eq. \ref{eq:gt-prob-pairwise-ranking} to obtain the dataset for the binary classification problem. 

	\section{Evaluation}
	\begin{figure*}
		\begin{adjustbox}{width=.99\linewidth,center}
			\includegraphics{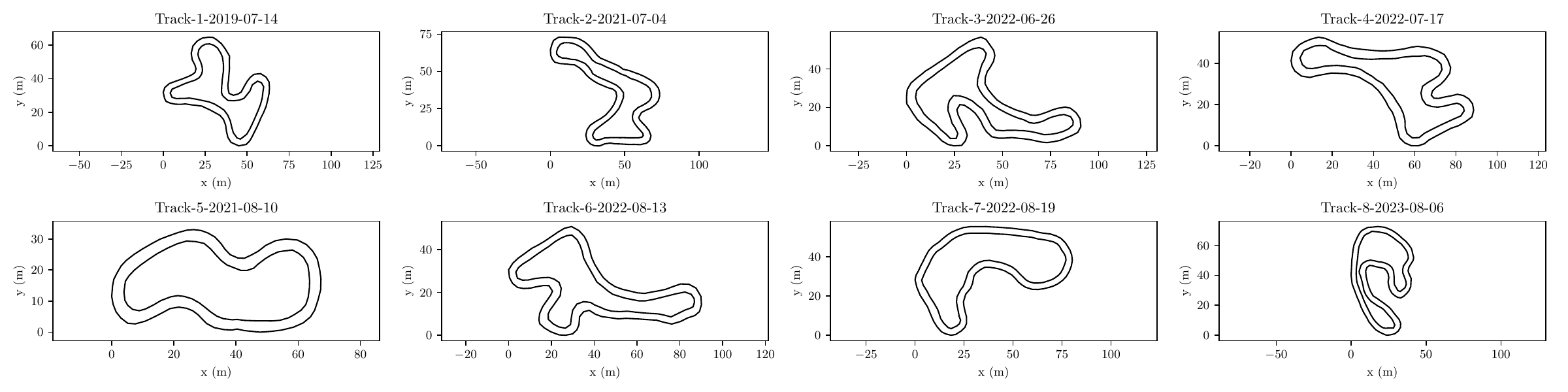}
		\end{adjustbox}
		\caption{The dataset used for evaluation contains realistic race track layouts including curvy ones which are challenging for the lane detection.}
		\label{fig:datasettrackssummary}
	\end{figure*}
	
	This section evaluates the proposed lane detection algorithm regarding its accuracy, efficiency and robustness against false positives. It shows the ability to detect the entire visible lane in most instances in real time. We use a dataset consisting of several race tracks built from real data with different realistic track layouts.
	The proposed algorithm is implemented in C++, geometric computations were performed in single-precision floating point arithmetic (instead of arbitrary-precision arithmetic) except the Delaunay-triangulation which is computed by the CGAL-library \cite{cgal:hs-chdt3-13b}. The neural network was implemented with PyTorch for training, whereas TensorFlow Lite was used for inference and evaluation, performing the inference on CPU. 
	All experiments were executed on a PC with an Intel i7-12700K CPU with 32 GiB of RAM, comparable to the PC installed in our formula student race car. 
	
	\subsection{Dataset}
	
	As no open datasets are available which contain realistic race track layouts with boundaries marked by points, we created our own dataset for evaluation built from real recorded data on various race track layouts (Fig. \ref{fig:datasettrackssummary}). It contains for every track layout a map with the 2D-points and the trajectory driven by the car. The points were mapped by the LiDAR-based SLAM-algorithm  \textit{Rao-blackwellized particle filter-SLAM} (RBPF-SLAM) \cite{4543250}, a variant of FastSLAM \cite{MontemerloTKW02} that is implemented in the Mobile Robot Programming Toolkit (MRPT) library \cite{MRPT-home}. For every map, the ground-truth driving lane boundaries were annotated manually. Each time the SLAM updated the map, it also updated the car pose. These car poses are used as a starting parameter for the algorithm and are part of the dataset as well. Overall, the dataset consists of \DatasetNumTracks{} tracks and a total of 1964 car poses. 
	
	\subsection{Experiments and Metrics} 

	We evaluate the lane detection algorithm in the same way as it is used on the racecar, every time the map is updated. To avoid the tedious task of manually annotating partial maps, we instead annotate the complete map once and then simulate partial maps using this complete map. For this, we use the known perceptual field and the car pose, and filter out the points outside the perceptual field. We then simulate false-positives by randomly sampling from an uniform distribution in the field of perception. The LiDAR-sensor of the car has a field of perception of a half-circle in front of the car with a radius corresponding to the perception range. The noise-free maps were augmented with false positive rates of 10\%, 30\%, and 50\%, and two different ranges of perception were simulated, 30m and 50m, resulting in a total of 6 variations of the dataset. 
	
	The neural network was trained on enumerated candidates which yielded approx. 270.000 candidates, grouped into 1964 lists. The training schedule was using 200 epochs with a batch-size of 8192 and Adam as optimizer with an learning-rate of $0.008$. As the overall architecture has only 1001 parameters, it takes only around two minutes to train on an Nvidia RTX 3070 Ti GPU.
	
	As metrics, we use the Intersection-over-Union (IoU) between the ground-truth lane and the predicted lane. Also, we determine for each predicted lane whether it diverges from the ground-truth lane, and if so, the distance after which it does, we name it the \textit{divergence distance}. We also compare the length of the predicted lane to the length of the ground-truth lane to identify cases where a shorter lane is predicted. If every vertex of both boundaries is predicted correctly, such a lane prediction is said to be ground-truth, i.e. exactly matching the correct lane.
	Additionally to stating the values of these metrics directly, we introduce categories for predictions based on these metrics for easier interpretability of the results (Fig. \ref{fig:failure-cases-categories}).
	A non-ground-truth prediction can either be too short (Fig. \ref{fig:failure-case-too-short}), diverging (Fig. \ref{fig:failure-case-diverging}) (which may be a critical failure case if it happens below 20m) or near-ground truth (Fig. \ref{fig:failure-case-almost-gt}).
	
	\subsection{Used parameters} 
	
	The maximum number of iterations $it_{max}$ was set in all experiments to $\DefaultMaxIterations$ except for Fig. \ref{fig:bestrankedneedediterations}, where it was set to $20000$. The values for the geometric constraints were set to $\MinLaneWidth = 2.5$ and $\MaxLaneWidth = 6.5$, the maximum distance between two points to $\MaxConeDistance=5.5$. The maximum angle between two consecutive segments was set to $\MaxSemgentsAngle = 90\degree$. These values were set such that no ground-truth lane occurring in the dataset was detected as violating the constraints, this was verified experimentally.
	\subsection{Results}

	\begin{figure*}[ht]
		\centering
		\begin{minipage}{0.3\textwidth}
			\begin{adjustbox}{width=\linewidth,center}
				\includegraphics{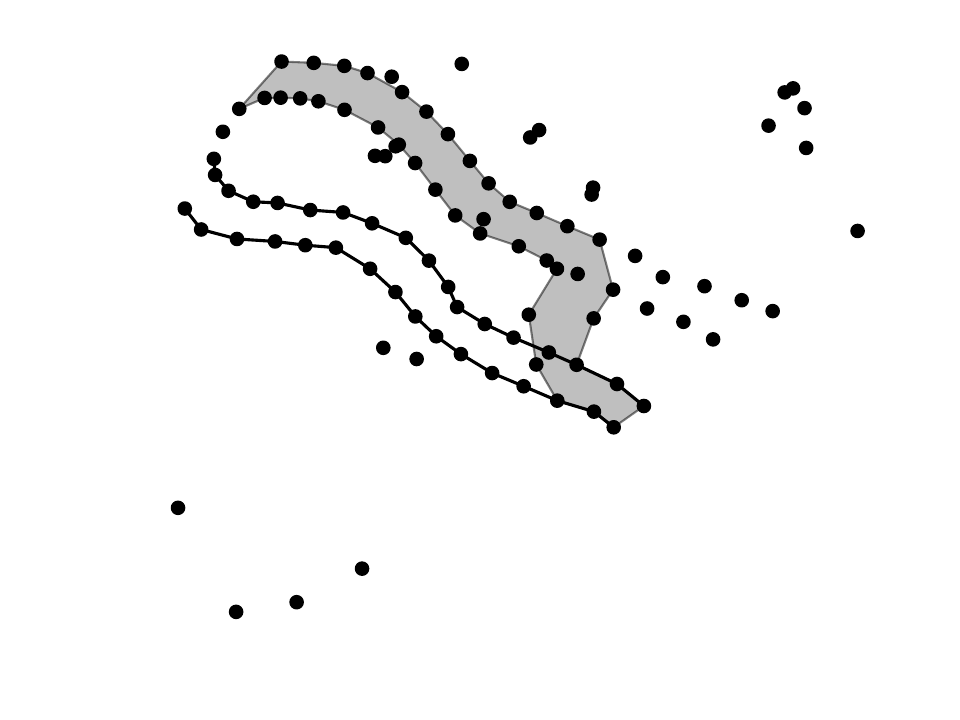}
			\end{adjustbox}
			\subcaption[second caption.]{Failure case: Predicted lane diverges (leaves the track), critical if below 20m}\label{fig:failure-case-diverging}
		\end{minipage}
		\begin{minipage}{0.3\textwidth}
			\begin{adjustbox}{width=\linewidth,center}
				\includegraphics{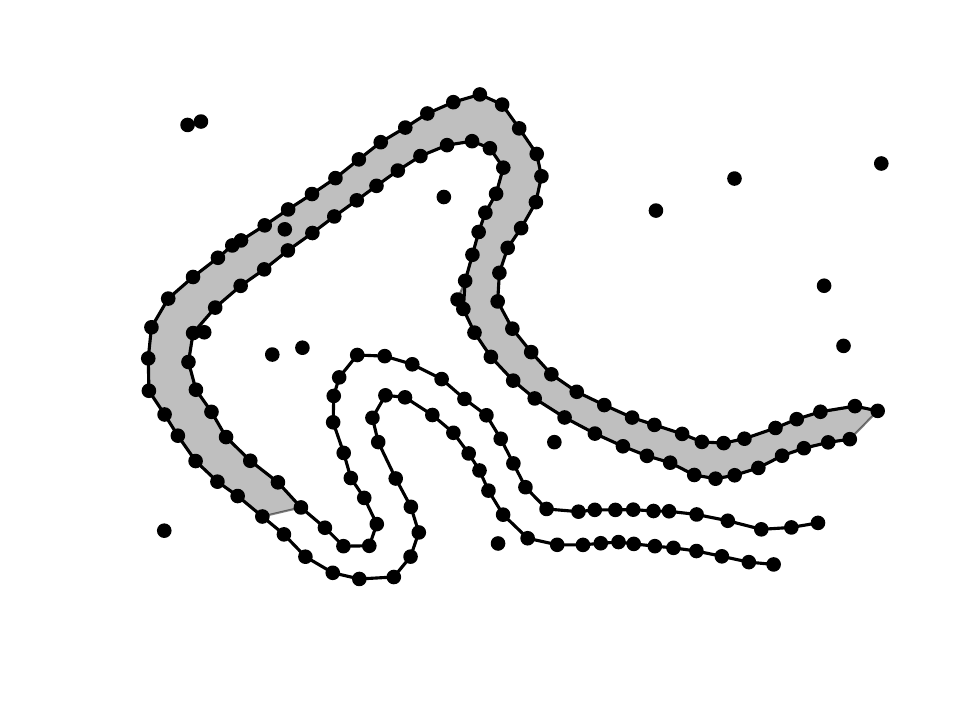}
			\end{adjustbox}
			\subcaption[first caption.]{Non-critical failure case: predicted lane is too short, (less than 90\% of the ground-truth length)}\label{fig:failure-case-too-short}
		\end{minipage}
		\begin{minipage}{0.3\textwidth}
			\begin{adjustbox}{width=\linewidth,center}
				\includegraphics{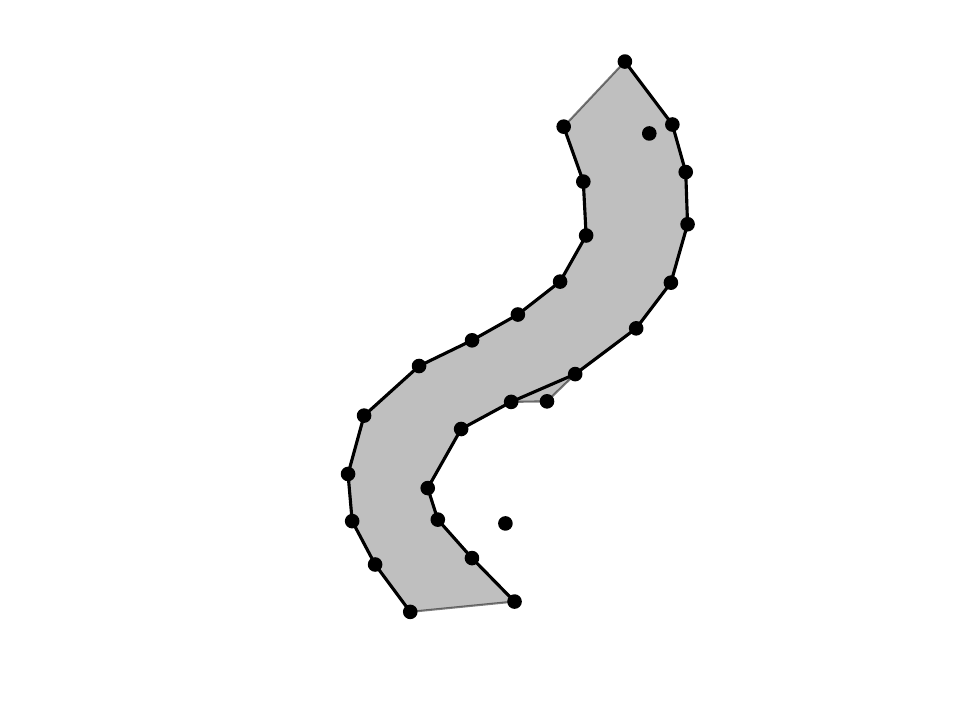}
			\end{adjustbox}
			\subcaption[first caption.]{Success case: Near ground-truth}\label{fig:failure-case-almost-gt}
		\end{minipage}%
		\caption{We classify each prediction into the following categories: Besides ground-truth predictions, the predicted lane may diverge \ref{fig:failure-case-diverging}. It is critical in case it happens below 20m, as it can cause the car to leave the track. The lane can be too short \ref{fig:failure-case-too-short}, this may only make the racing-line suboptimal. If the predicted lane is neither of these, it is a \textit{near ground-truth} solution \ref{fig:failure-case-almost-gt}. Such predictions typically have a high overlap with the ground-truth lane. (The gray polygon is the prediction, the black boundaries are ground-truth.)} 
		\label{fig:failure-cases-categories}
	\end{figure*}

	\begin{table*}[ht]
		\centering
		 \begin{tabular}{lccccccccc}
	\toprule
	\textbf{FP-Rate} & \multicolumn{4}{c}{\textbf{Success-Cases (\%)}} & \multicolumn{1}{c}{\textbf{Failure-cases (\%)}} & \textbf{Sum: Success (\%)} & \textbf{Failure (\%)} & \textbf{mean-IoU (\%)} \\
	\cmidrule(lr){2-5} \cmidrule(lr){6-6}
	& GT & Near-GT & Too short & Diverging $\geq20$m & Diverging $<20$m \\
	\midrule
0 \% & 34.5 & 61.9 & 0.7 & 2.2 & 0.6 & 99.4 & 0.6 & 96.9\\
10 \% & 24.9 & 70.9 & 0.5 & 2.9 & 0.8 & 99.2 & 0.8 & 96.0\\
30 \% & 9.3 & 85.1 & 0.8 & 4.1 & 0.7 & 99.3 & 0.7 & 93.2\\
50 \% & 2.1 & 70.9 & 2.6 & 19.3 & 5.0 & 95.0 & 5.0 & 79.5\\
	\bottomrule
\end{tabular}

		\caption{Analysis of the lane prediction accuracy: Rates of different prediction categories depending on the false-positive rate. With zero false-positives, the ground-truth predictions are dominant, but they decrease rapidly as the number of false-positives increase. Critical failures where the driving lane diverges below 20m only occur in about 1\% of instances, except at extreme FP-rates of 50\%. Predictions that are too short rarely occur overall.}
		\label{tab:results}
	\end{table*}

	\begin{figure}[htb]
		\begin{adjustbox}{width=.7\linewidth,center}
			\includegraphics{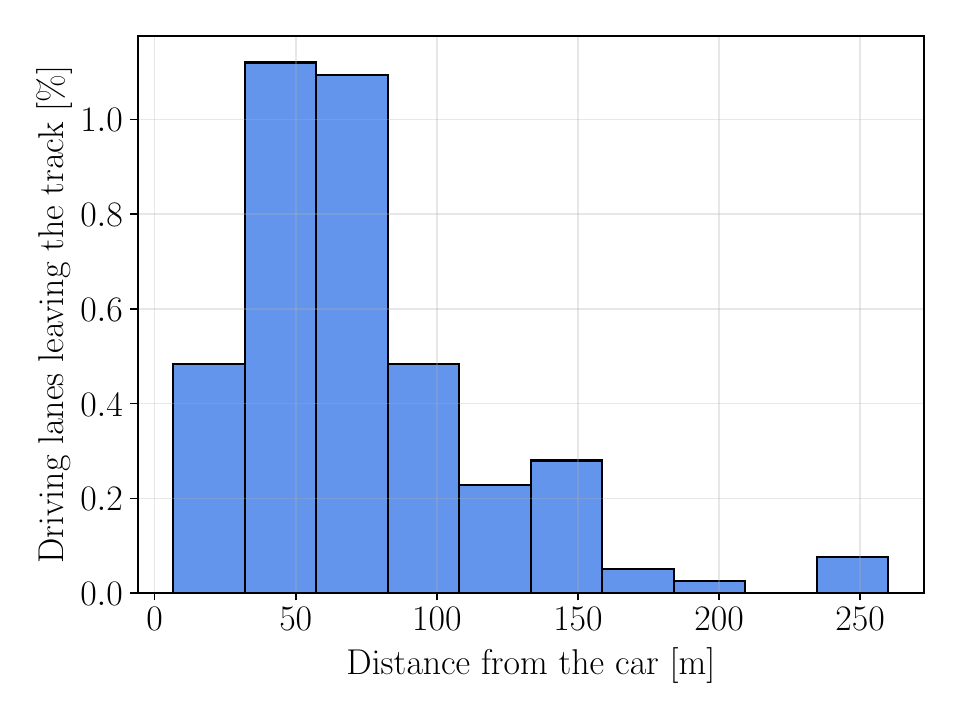}
		\end{adjustbox}
		\caption{Distance till the divergence point in case of divergent lane predictions. The overall rate of divergent predictions is low. (Shown for FP-rates of 0\% to 30\%)}
		\label{fig:drivinglanelengths-leaving-gt}
	\end{figure}

	\begin{figure*}[htb]
		\begin{adjustbox}{width=0.12\linewidth}
			\includegraphics{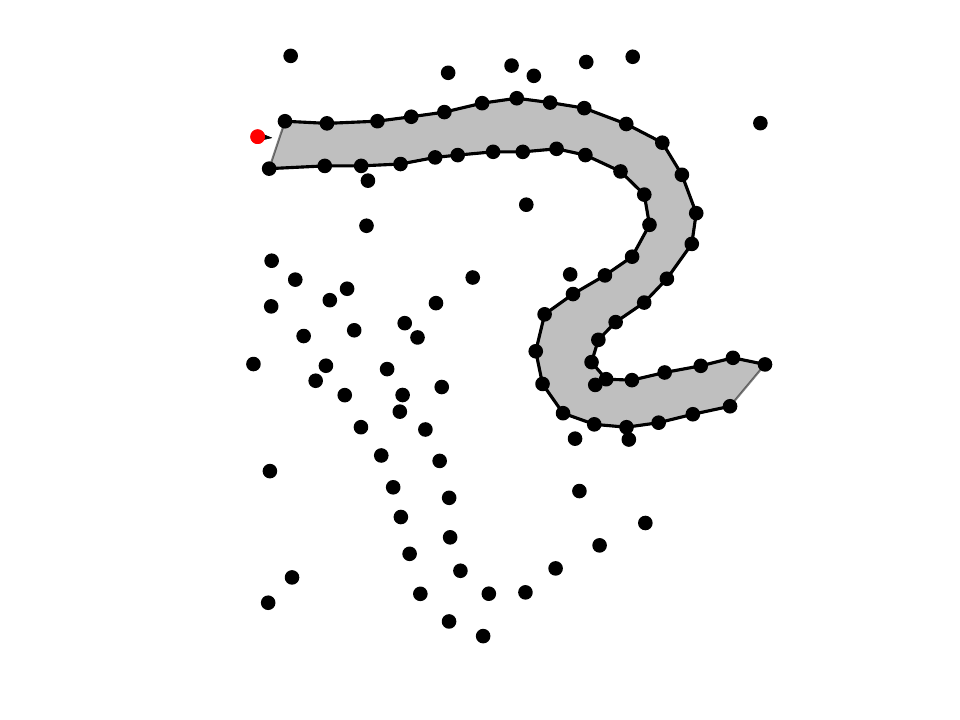}
		\end{adjustbox}
		\begin{adjustbox}{width=0.12\linewidth}
			\includegraphics{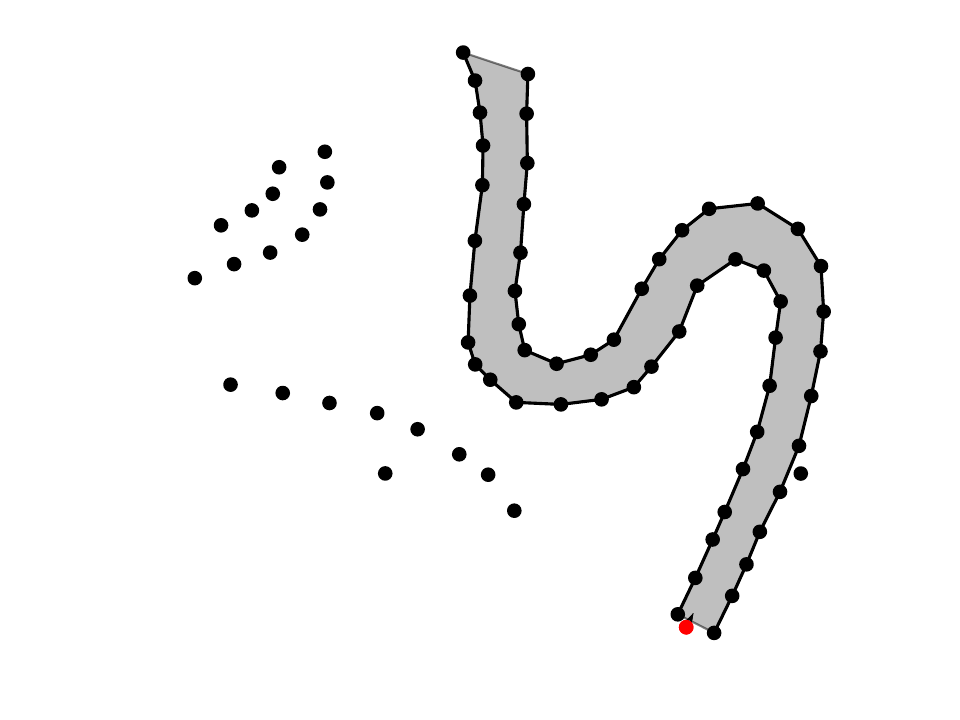}
		\end{adjustbox}
		\begin{adjustbox}{width=0.12\linewidth}
			\includegraphics{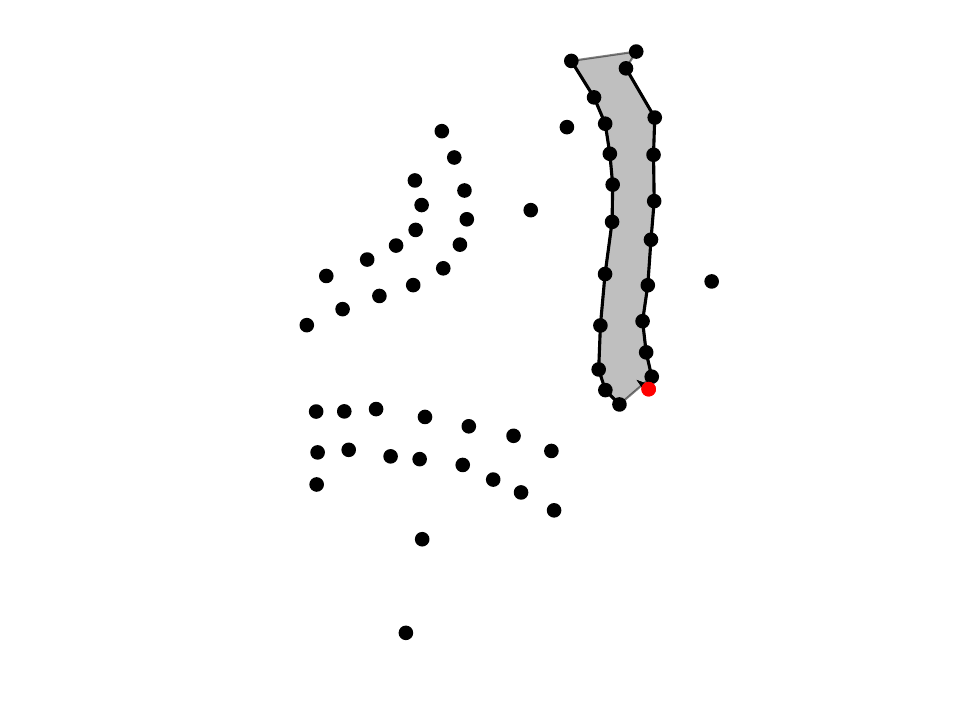}
		\end{adjustbox}
		\begin{adjustbox}{width=0.12\linewidth}
			\includegraphics{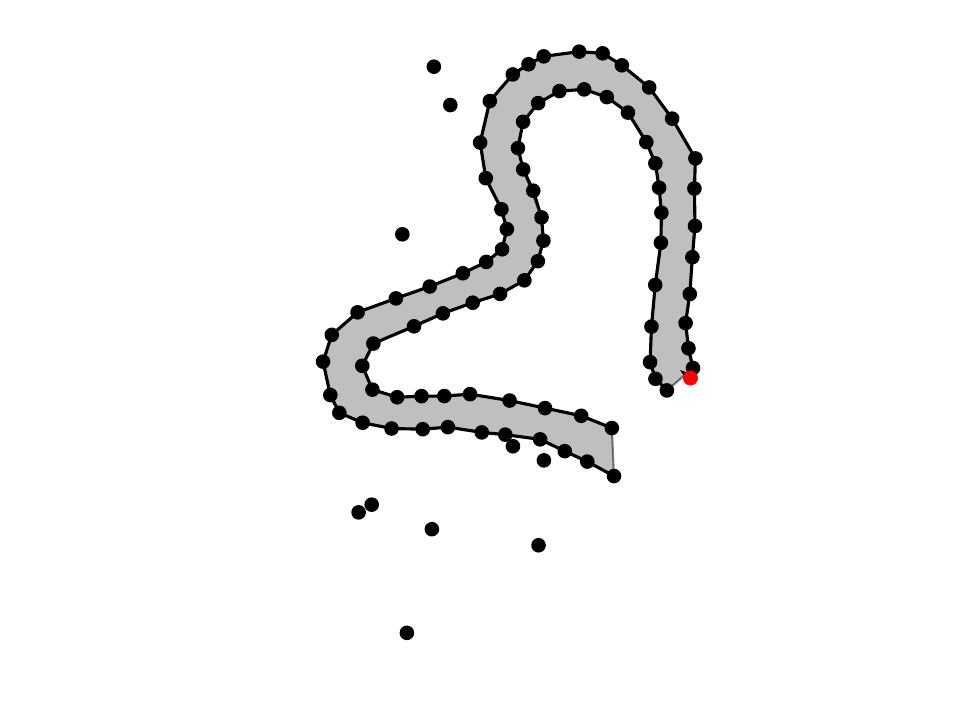}
		\end{adjustbox}
		\begin{adjustbox}{width=0.12\linewidth}
			\includegraphics{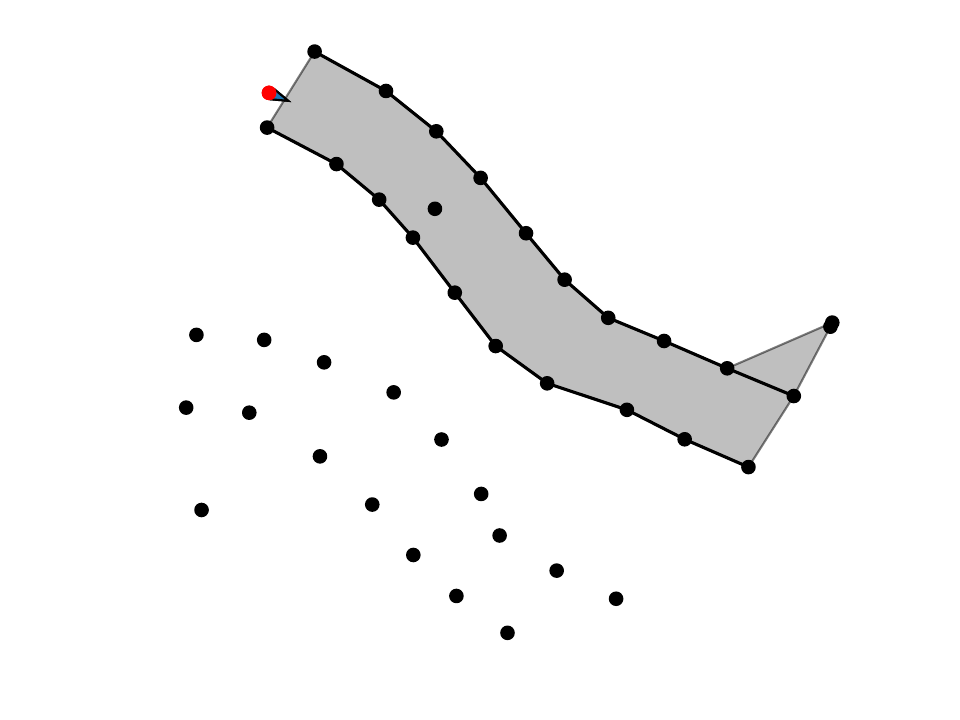}
		\end{adjustbox}
		\begin{adjustbox}{width=0.12\linewidth}
			\includegraphics{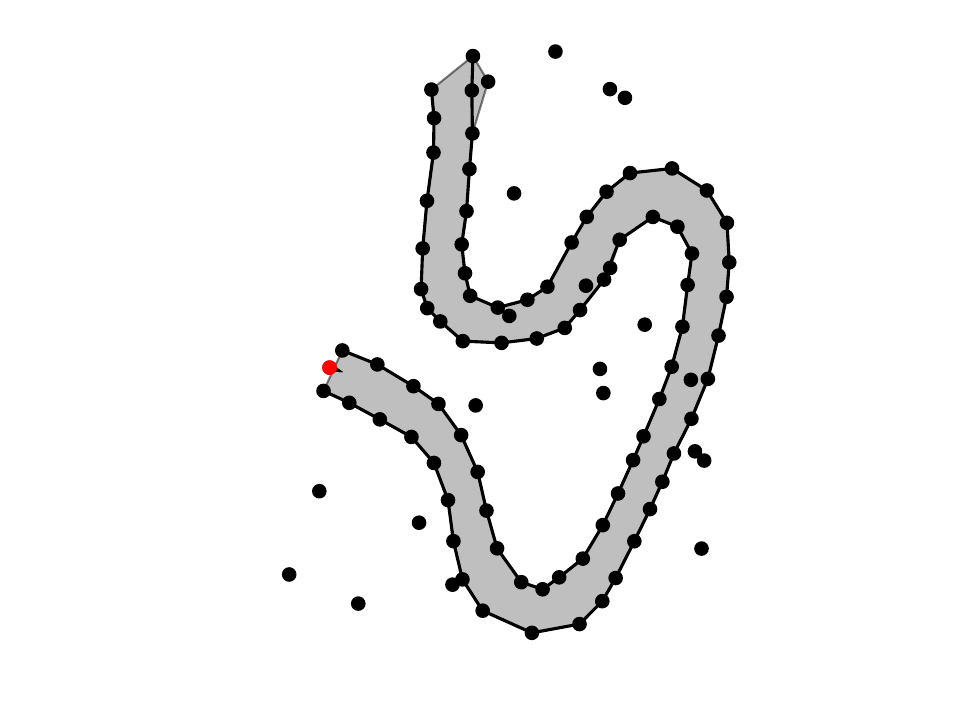}
		\end{adjustbox}
		\begin{adjustbox}{width=0.12\linewidth}
			\includegraphics{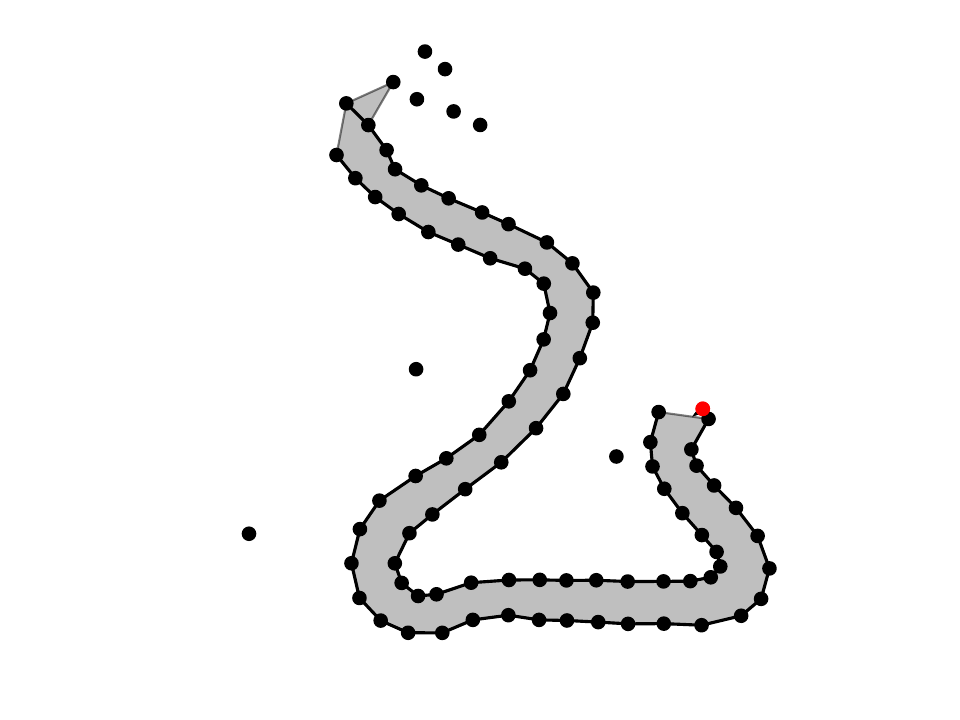}
		\end{adjustbox}
		\begin{adjustbox}{width=0.12\linewidth}
			\includegraphics{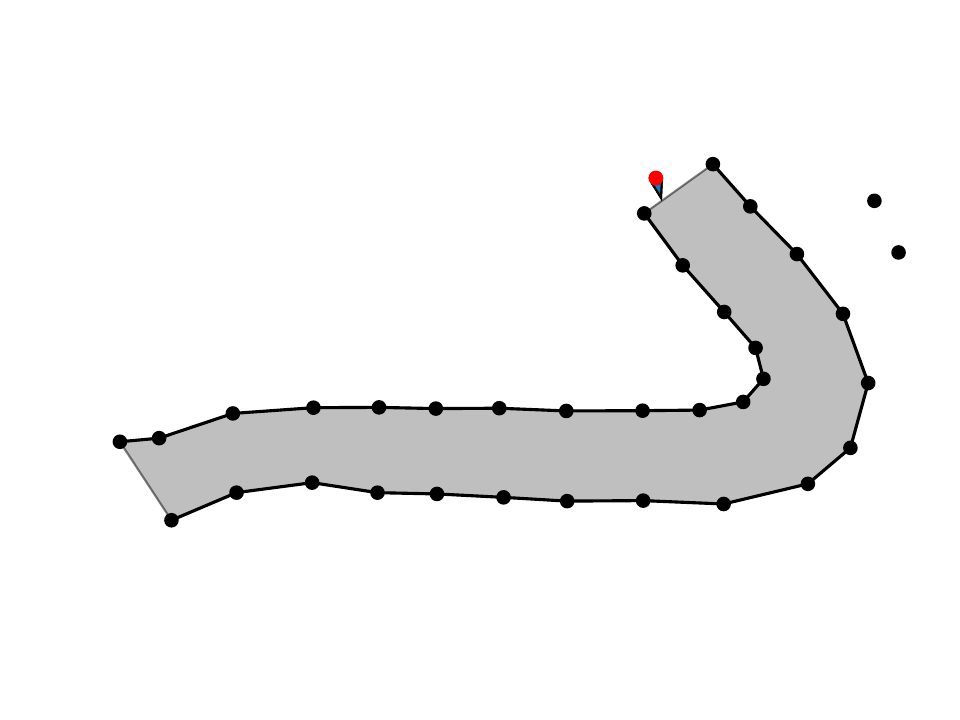}
		\end{adjustbox}
		\begin{adjustbox}{width=0.12\linewidth}
			\includegraphics{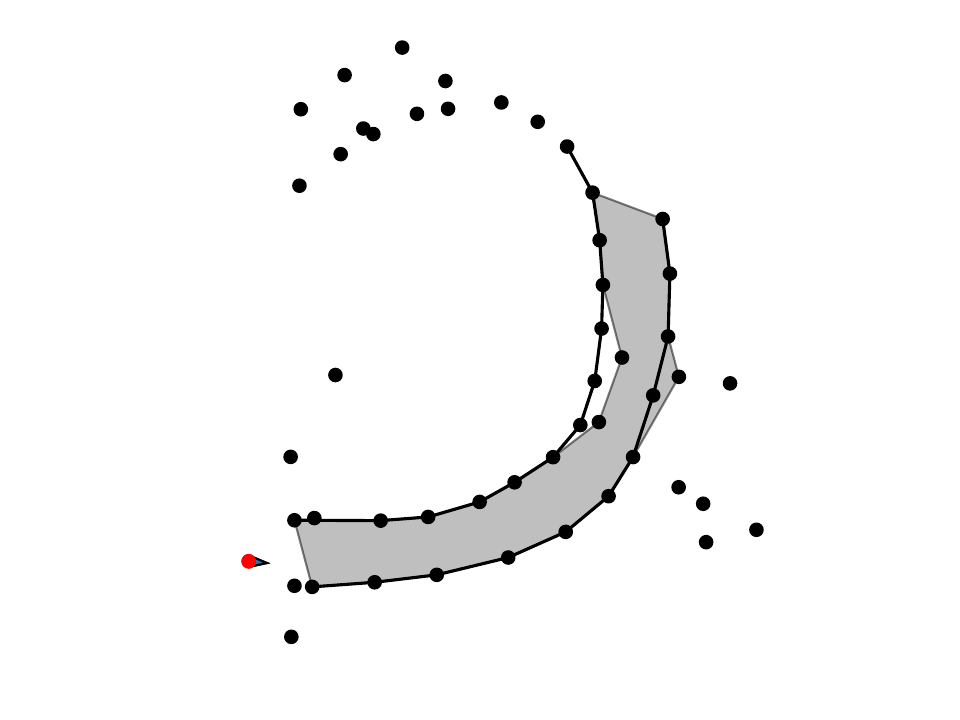}
		\end{adjustbox}
		\begin{adjustbox}{width=0.12\linewidth}
			\includegraphics{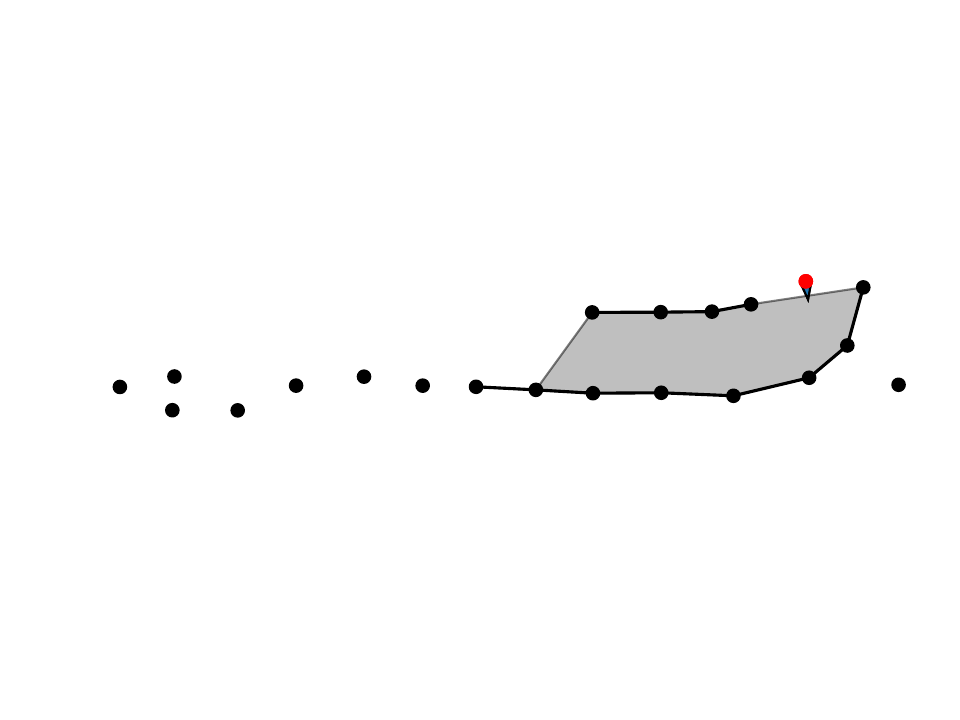}
		\end{adjustbox}
		\begin{adjustbox}{width=0.12\linewidth}
			\includegraphics{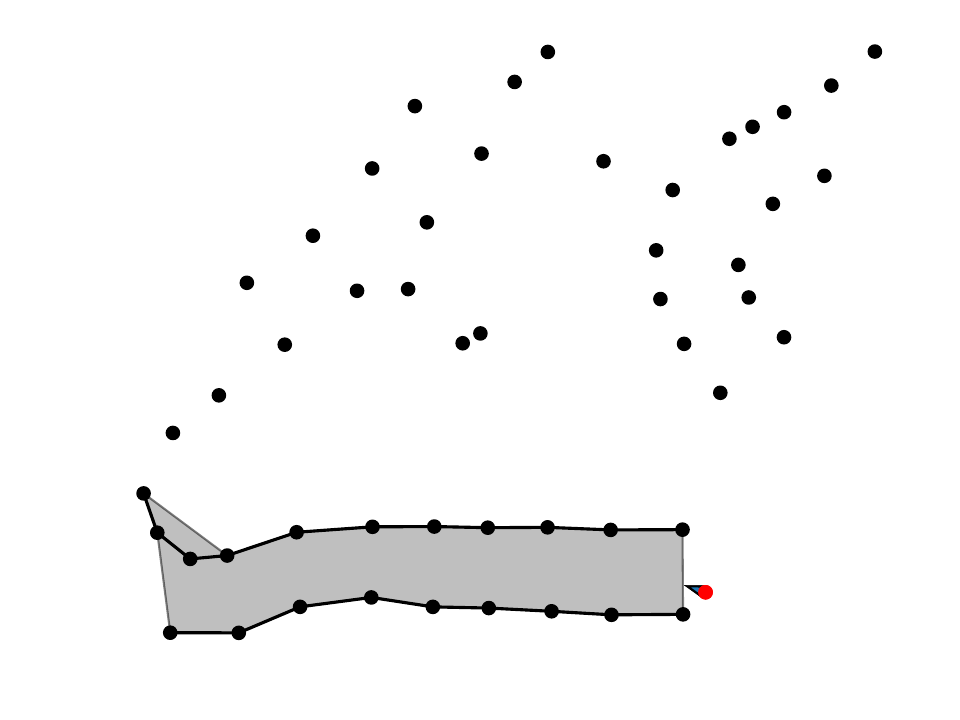}
		\end{adjustbox}
		\begin{adjustbox}{width=0.12\linewidth}
			\includegraphics{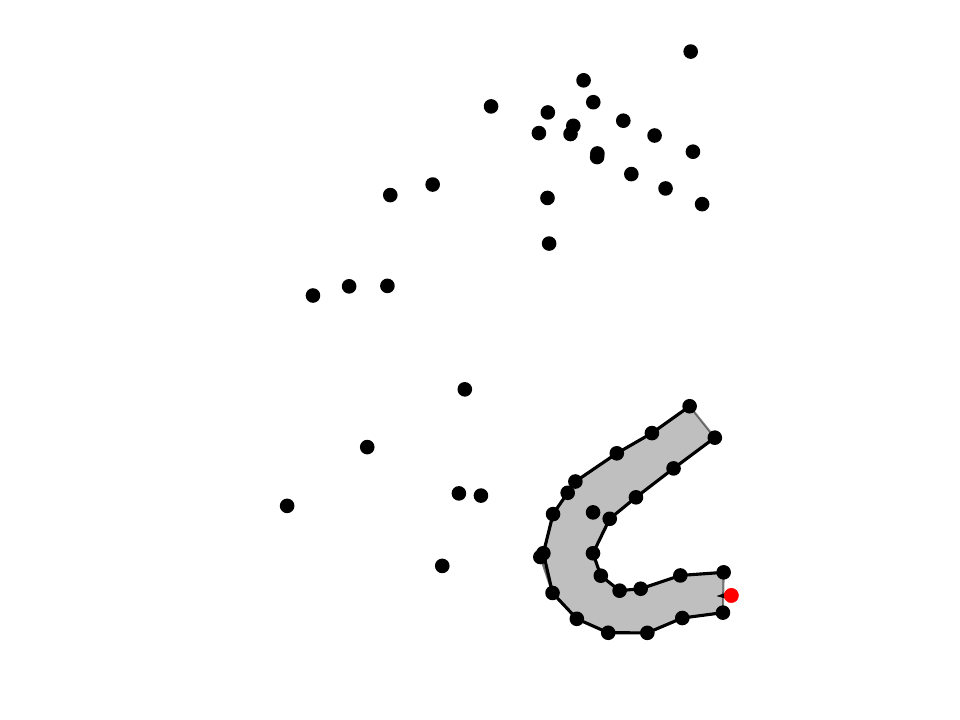}
		\end{adjustbox}
		\begin{adjustbox}{width=0.12\linewidth}
			\includegraphics{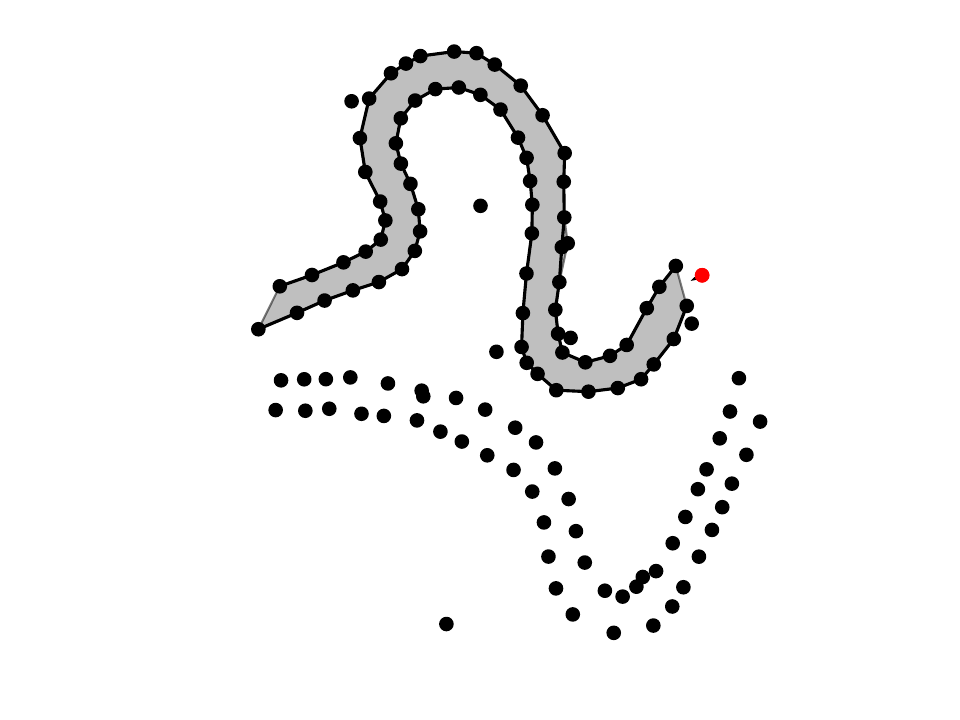}
		\end{adjustbox}
		\begin{adjustbox}{width=0.12\linewidth}
			\includegraphics{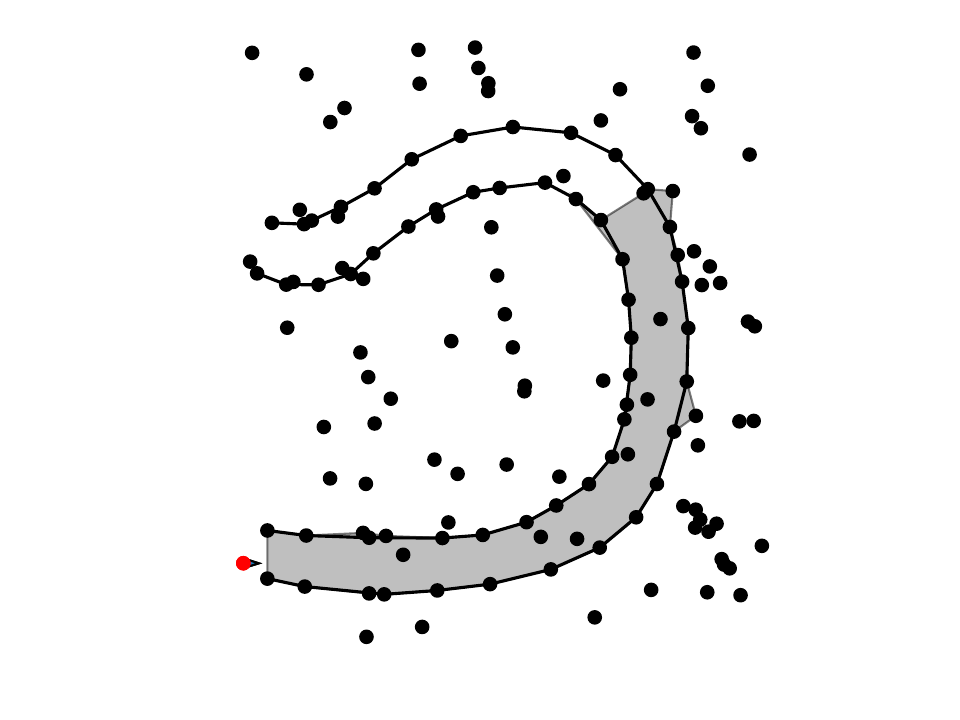}
		\end{adjustbox}
		\begin{adjustbox}{width=0.12\linewidth}
			\includegraphics{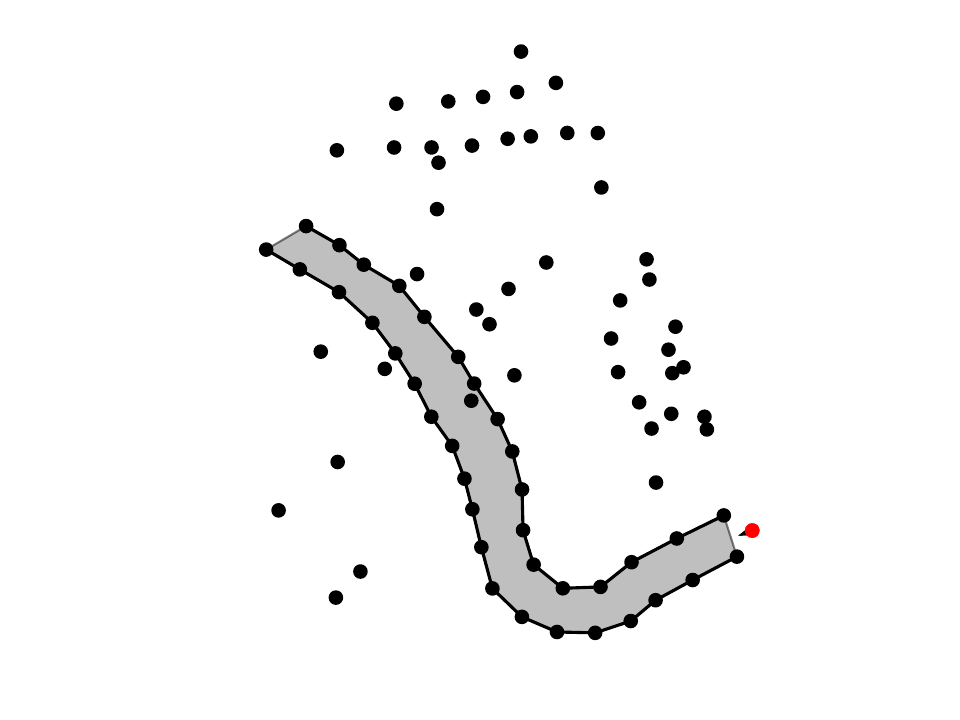}
		\end{adjustbox}
		\begin{adjustbox}{width=0.12\linewidth}
			\includegraphics{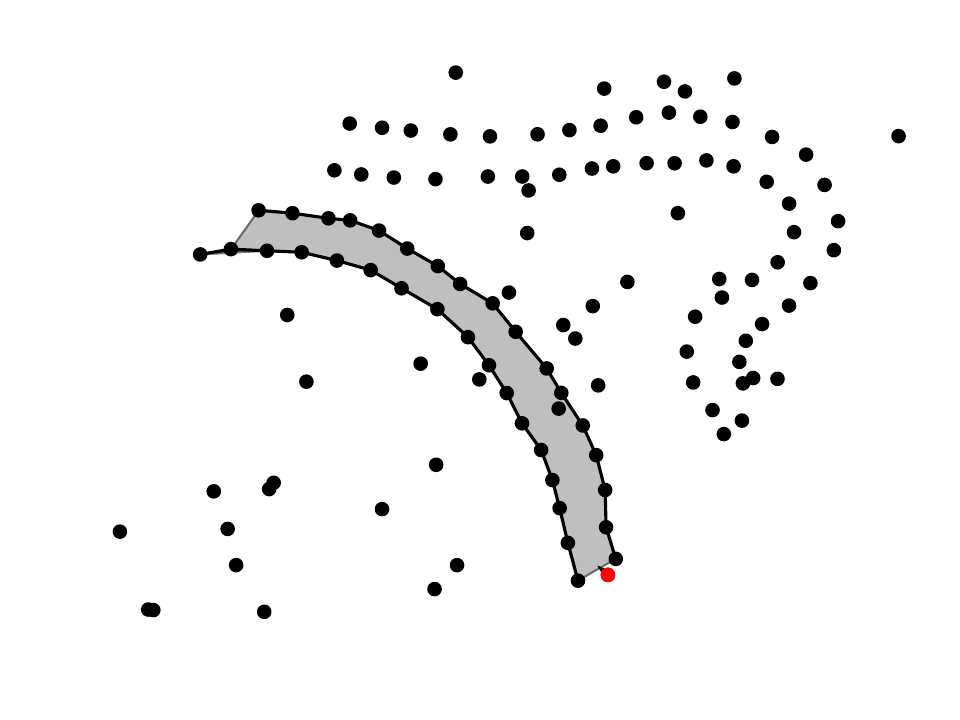}
		\end{adjustbox}
		\begin{adjustbox}{width=0.12\linewidth}
			\includegraphics{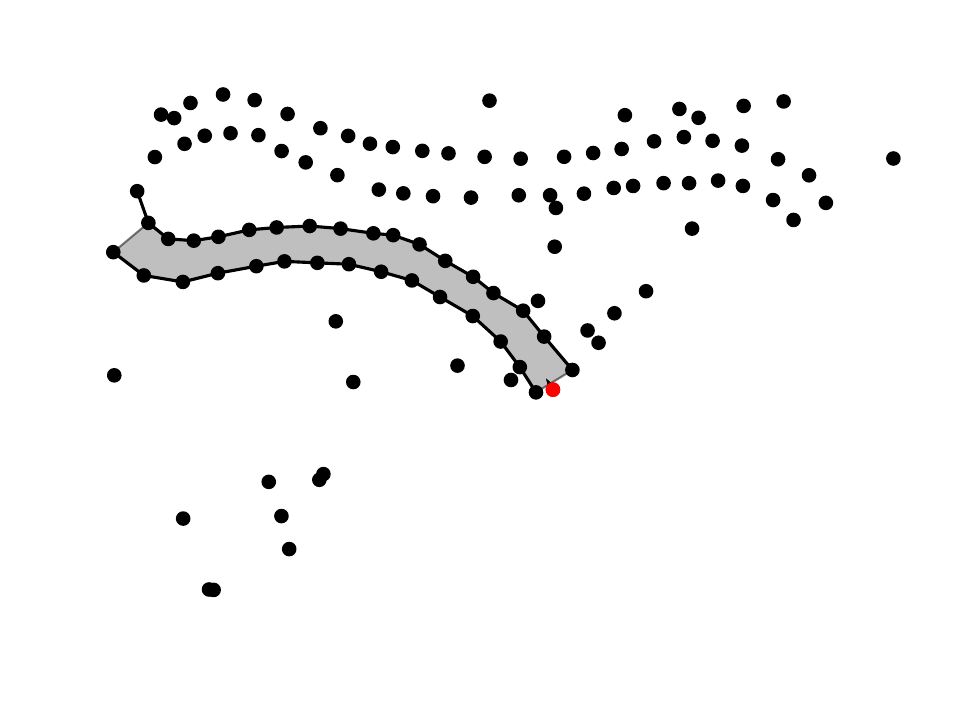}
		\end{adjustbox}
		\begin{adjustbox}{width=0.12\linewidth}
			\includegraphics{figures/case_2022-07-17_Trackdrive_20-10-47_355_success_iou_1_00}
		\end{adjustbox}
		\begin{adjustbox}{width=0.12\linewidth}
			\includegraphics{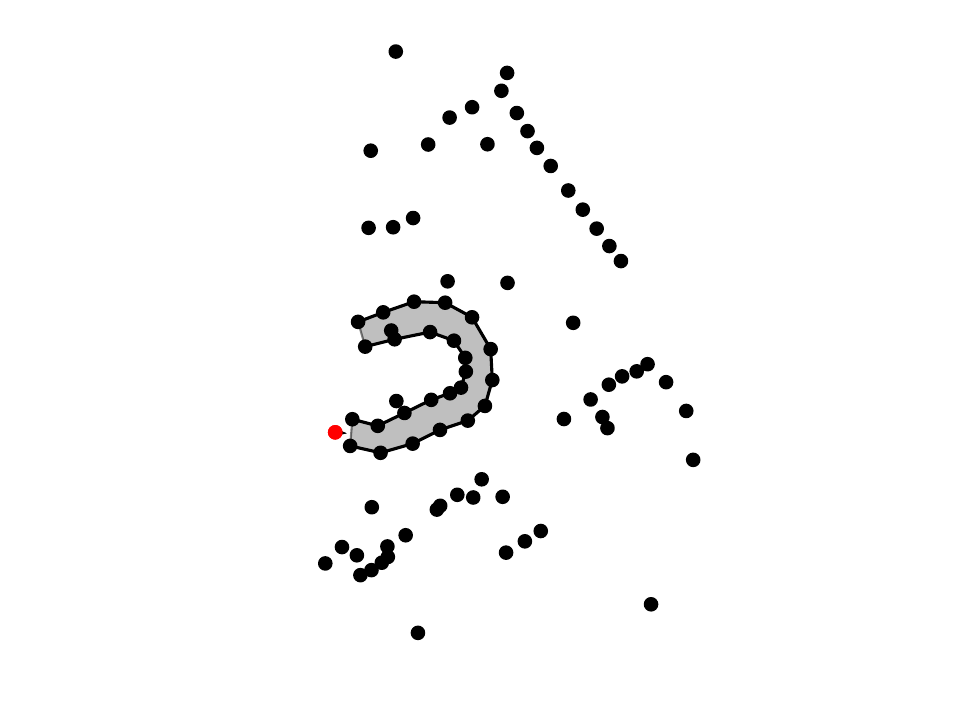}
		\end{adjustbox}
		\begin{adjustbox}{width=0.12\linewidth}
			\includegraphics{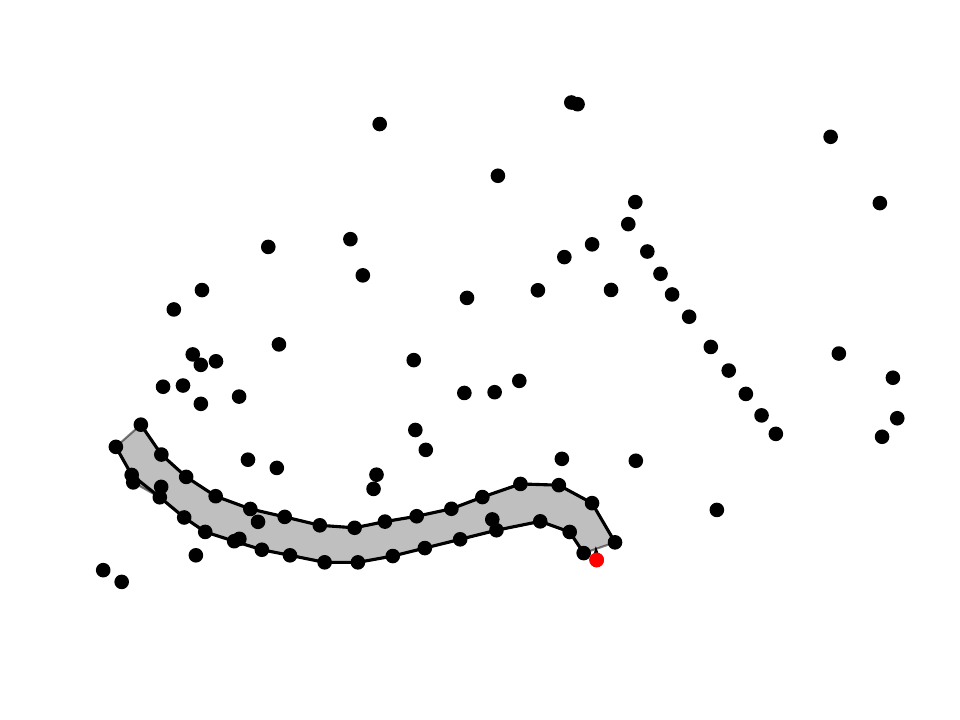}
		\end{adjustbox}
		\begin{adjustbox}{width=0.12\linewidth}
			\includegraphics{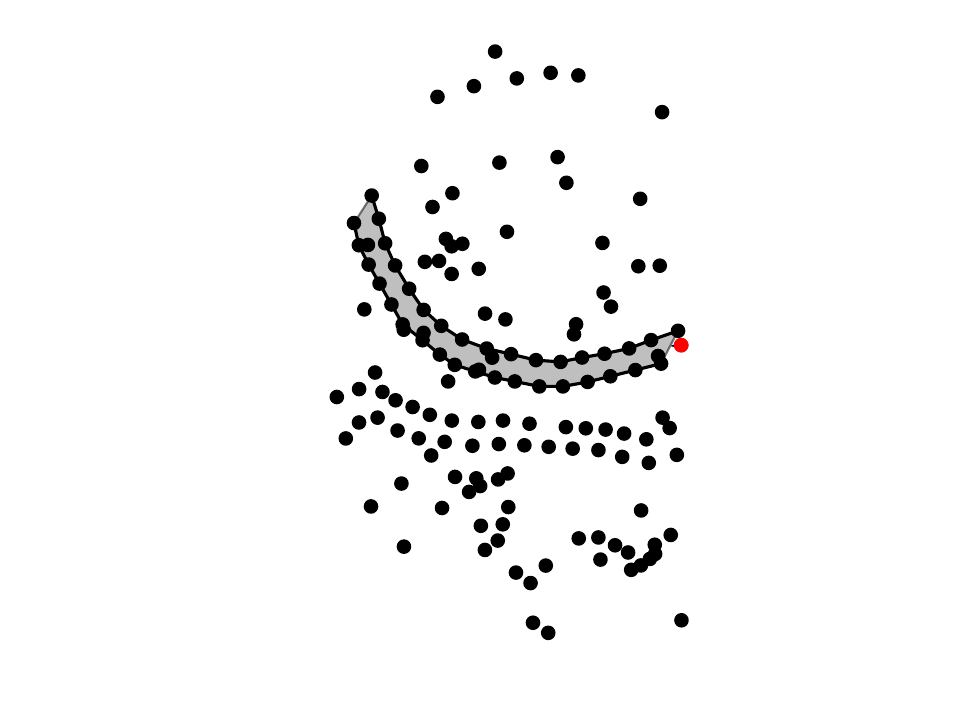}
		\end{adjustbox}
		\begin{adjustbox}{width=0.12\linewidth}
			\includegraphics{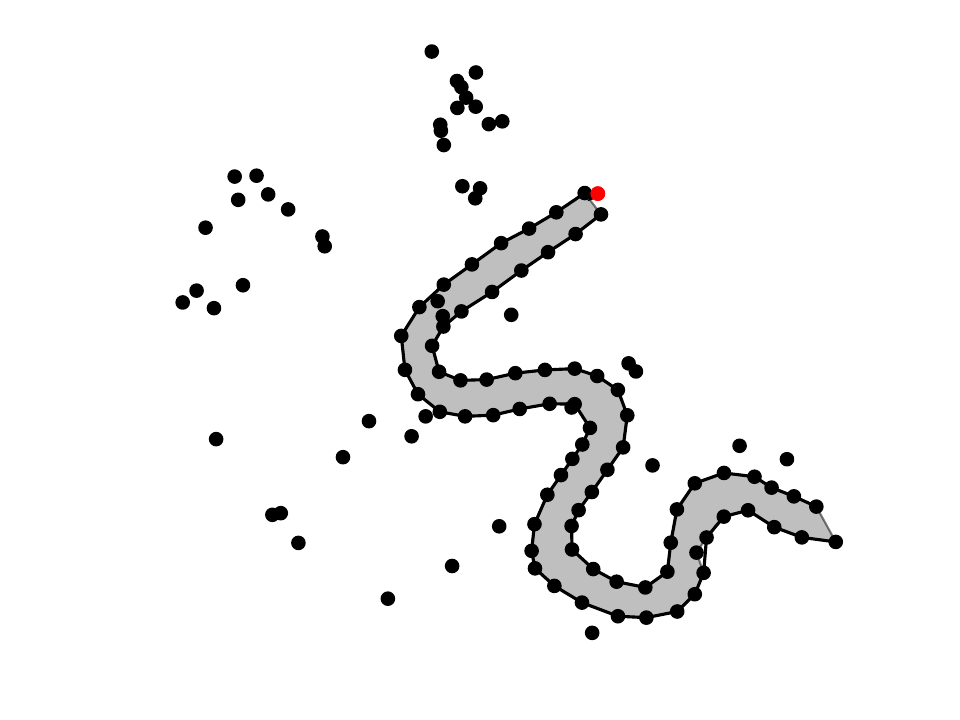}
		\end{adjustbox}
		\begin{adjustbox}{width=0.12\linewidth}
			\includegraphics{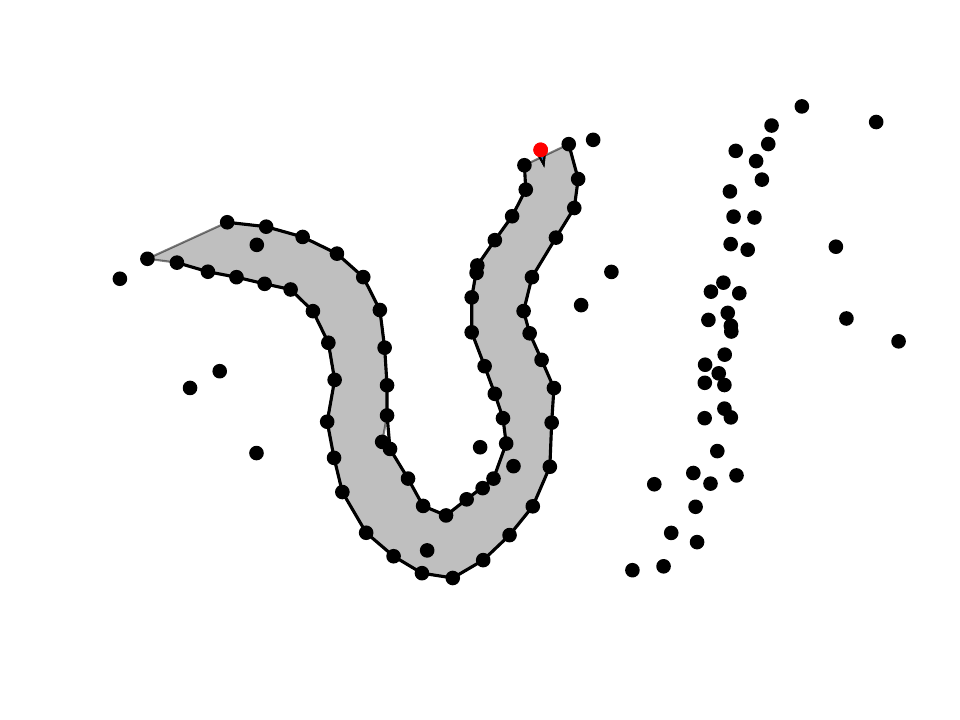}
		\end{adjustbox}
		\begin{adjustbox}{width=0.12\linewidth}
			\includegraphics{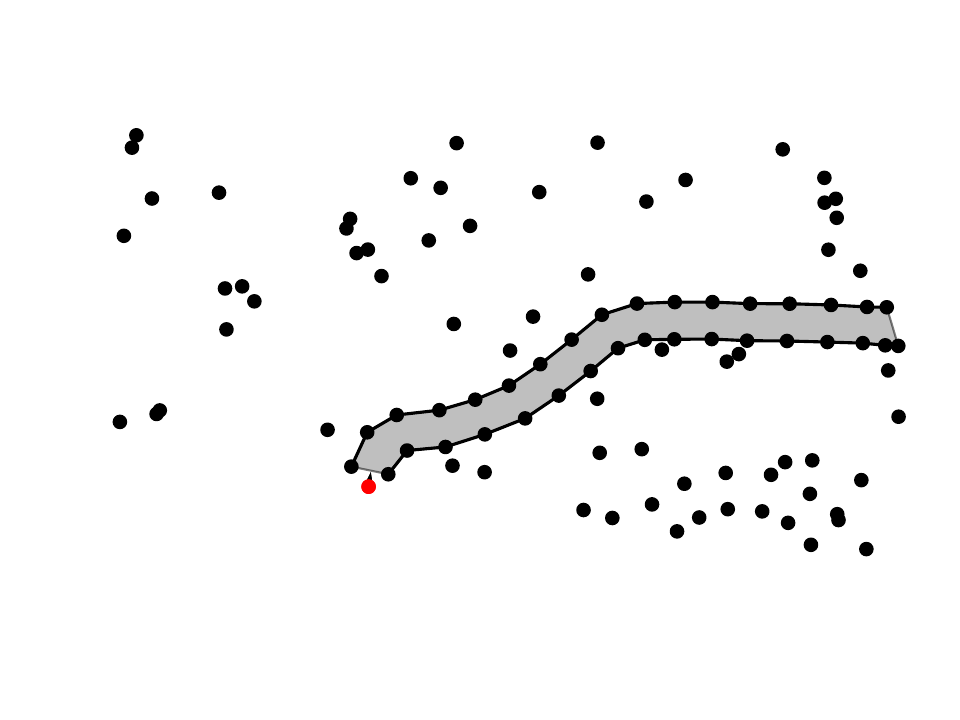}
		\end{adjustbox}
		\caption{Qualitative results of successful lane detection: Our algorithm is robust to high amounts of false-positives, disconnected parts of the lane and is able to detect curvy, long-horizon lanes.}
		\label{fig:qualitative-results}
	\end{figure*}
	
	Table \ref{tab:results} shows the rates of various prediction categories at different false-positive rates. The rate of predicting a lane diverging below 20m is overall very low and below 1\% for FP-rates of 0\% - 30\%. Our rate of such critical failures is 0.6\%, which is lower than the method from \cite{9197098} where 5\% are achieved.
	At 50\% FP-rate, which normally occurs only with bad performing perception, the critical failures rise to 2.6\%, which is similar to the approach presented in \cite{9197098}.	See also Fig. \ref{fig:drivinglanelengths-leaving-gt} for a histogram of the divergence distances.
	Compared to the methods presented in \cite{9197098} and \cite{9341702}, our algorithm generally predicts much longer driving lanes (Fig. \ref{fig:lane-lengths-predicted}) of up to 100m. In \cite{9341702}, the algorithm is limited to 15m, whereas in \cite{9197098} the prediction horizon is not reported but from the limitation to 16 input points and the constrained distance between two points (which is 5m), it can be concluded that the maximum lane length can be 40m (in case of no false-positives and a straight lane). Qualitative results are shown in Fig. \ref{fig:qualitative-results}.

	\subsection{Enumeration Evaluation} 
	\label{sec:enumeration-evaluation}

	\begin{table}[htb]
		\centering
		 \begin{tabular}{lccccccccc}
			\toprule
			\textbf{FP-Rate} &  \textbf{Instances with complete enumeration (\%)} \\
			\midrule
			0 \% & 46.4\\
			10 \% & 34.4\\
			30 \% & 13.4\\
			50 \% & 2.3\\
			\bottomrule
		\end{tabular}
		\caption{Candidate enumeration completeness: Especially at a low false-positive rate, the solution set is completely enumerated in many instances, allowing finding the global optimum of the likelihood function.}
		\label{tab:enumeration-completeness}
	\end{table}

	\begin{figure}[htb]
		\centering
		\begin{adjustbox}{width=.99\linewidth,center}
			\includegraphics{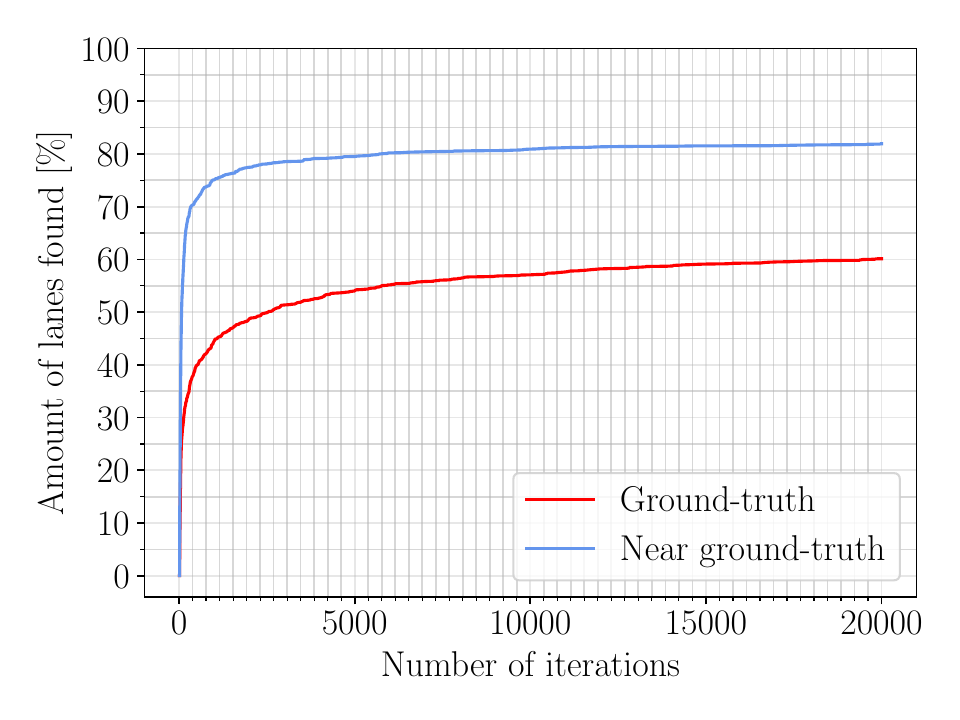}
		\end{adjustbox}
		\caption{Iterations needed to enumerate (near) ground truth lane candidates: Most are discovered during the initial 500 iterations, demonstrating effective search heuristics. Increasing the maximum number of iterations only yields diminishing returns in terms of the amount of correct candidates found, therefore setting it to \DefaultMaxIterations{} is sufficient while ensuring real-time operation. Note that although the enumeration is complete, a plateau is visible as the search becomes stuck in local minima -- lanes remain which are not found even after 20000 iterations and would require a vastly higher iteration count. (Shown for FP-rates of 0\% to 30\%)}
		\label{fig:bestrankedneedediterations}
	\end{figure}
	
	\begin{figure}[htb]
		\centering
		\begin{minipage}{0.241\textwidth}
			\begin{adjustbox}{width=\linewidth,center}
				\includegraphics{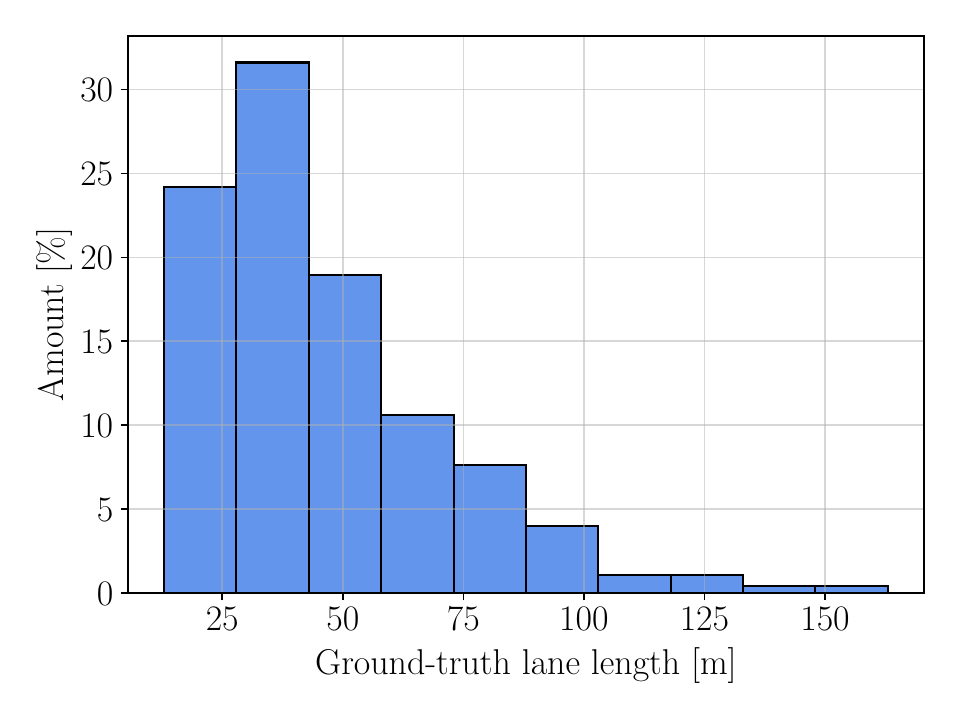}
			\end{adjustbox}
			\subcaption[first caption.]{Histogram of ground-truth lane lengths.}\label{fig:lane-lengths-gt}
		\end{minipage}
		\begin{minipage}{0.241\textwidth}
			\begin{adjustbox}{width=\linewidth,center}
				\includegraphics{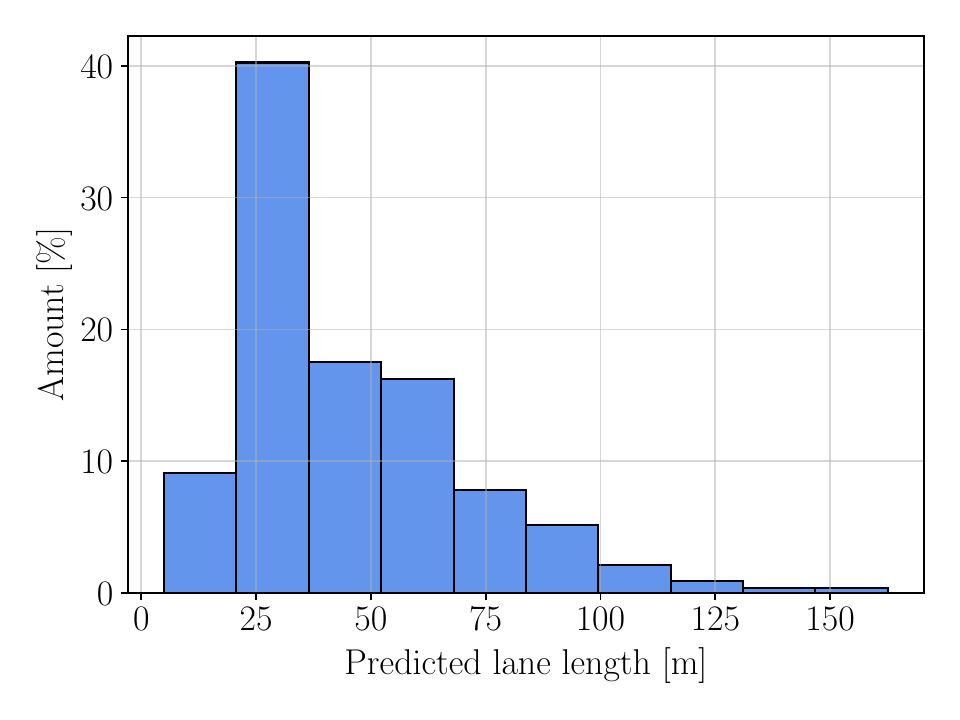}
			\end{adjustbox}
			\subcaption[first caption.]{Histogram of predicted lane lengths.}\label{fig:lane-lengths-predicted}
		\end{minipage}%
		\caption{(a): Histogram of the ground-truth driving lane lengths occurring in the dataset. The histogram of the length of non-divergent predictions (b) coincides -- very long driving lanes of 100m and even 150m are predicted similarly well. (Shown for FP-rates of 0\% to 30\%)} 
		\label{fig:lane-lengths}
	\end{figure}

	If an incorrect driving lane is predicted, we distinguish between two possible causes: either the correct driving lane was not found during enumeration, or the ranking selected the wrong candidate as the most likely one (the second step).
	In the following, we focus on the first failure mode and evaluate the frequency of enumeration failures. First, we observe that especially with no false-positives, the enumeration is in many instances complete (Tab. \ref{tab:enumeration-completeness}) -- this allows finding the global optimum of the likelihood function, without relying on heuristics.
	In cases of incomplete enumeration, we rely on heuristics to find a near ground-truth candidate with as few iterations as possible. We define a near ground-truth candidate as one that has at least 98\% IoU with the ground-truth driving lane. Figure \ref{fig:bestrankedneedediterations} shows that in most instances, a ground-truth or a near-ground-truth lane is enumerated within \DefaultMaxIterations{} iterations, demonstrating effectiveness of the search heuristics. At the same time, this analysis suggests a suitable value for the parameter $it_{max}$, the maximum number of iterations. Failure to enumerate a lane with at least 98\% IoU occurs in 5\% of instances with FP-rates of 0\% with 30\%.
	
	\begin{figure}[htb]
		\centering	
		\begin{adjustbox}{width=0.49\linewidth}
			\includegraphics{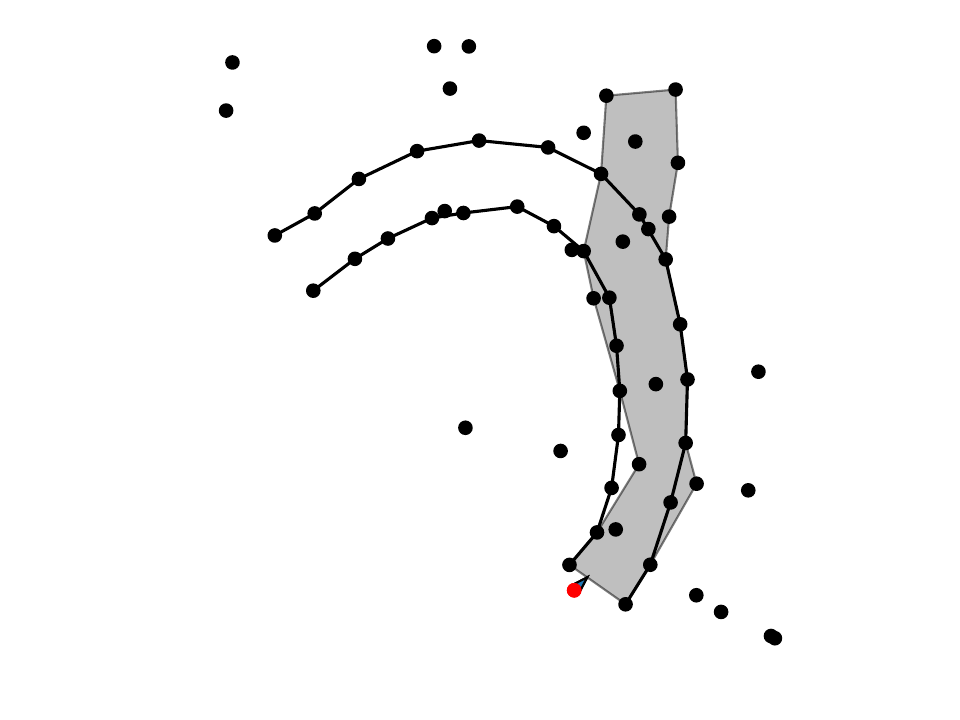}
		\end{adjustbox}
			\begin{adjustbox}{width=0.49\linewidth}
		\includegraphics{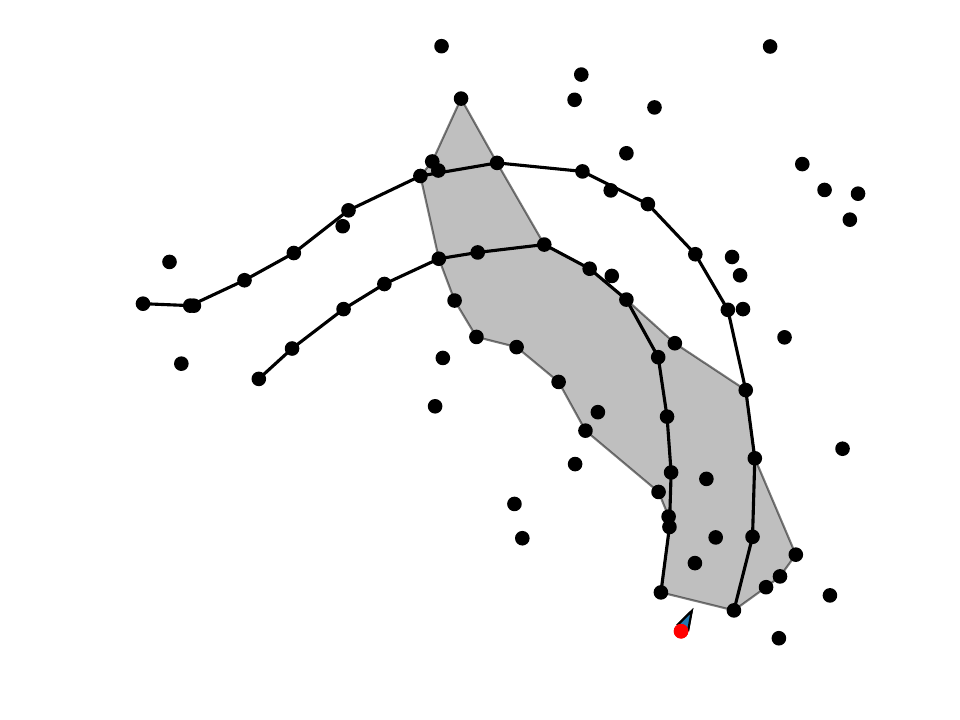}
	\end{adjustbox}
		\caption{Typical enumeration failure cases caused by greedy heuristics: The heuristic $\code{NVD}$ (see \ref{sec:nvd-def}) predicts straights (left) and does not favor parallel line segments (right). } 
		\label{fig:failure-cases-enumeration}
	\end{figure}

	Examples of enumeration failures are shown in Fig. \ref{fig:failure-cases-enumeration}. These failures are generally caused by greedy heuristics, leading to the search becoming stuck in a local minimum.
	Also, they are amplified by a high amount of false-positives. Specifically, failure mode (a) occurs as the heuristic \ref{sec:nvd-def} always predicts straight lanes. Such a failure mode may be reduced therefore by using a heuristic which predicts the curvature. Failure (b) occurs when the heuristic \ref{sec:nvd-def} selects the two next vertices independently, without considering the angle between the segments and thus without favoring parallel boundaries.
	
	\subsection{Lane ranking evaluation}
	
	\begin{figure}[htb]
		\centering	
		\begin{adjustbox}{width=0.49\linewidth}
			\includegraphics{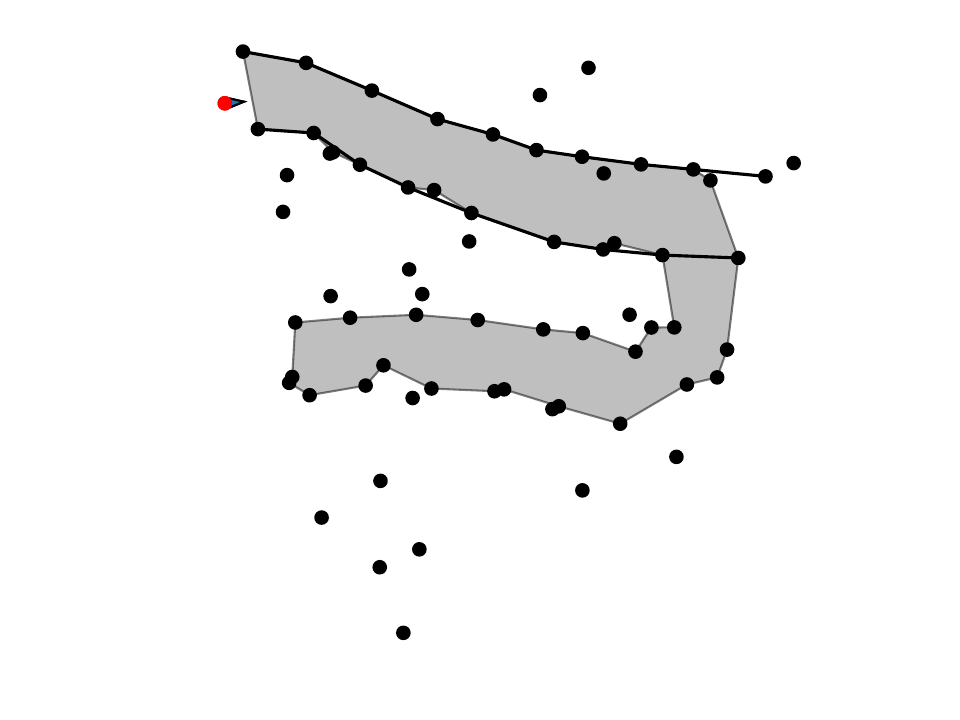}
		\end{adjustbox}
		\begin{adjustbox}{width=0.49\linewidth}
			\includegraphics{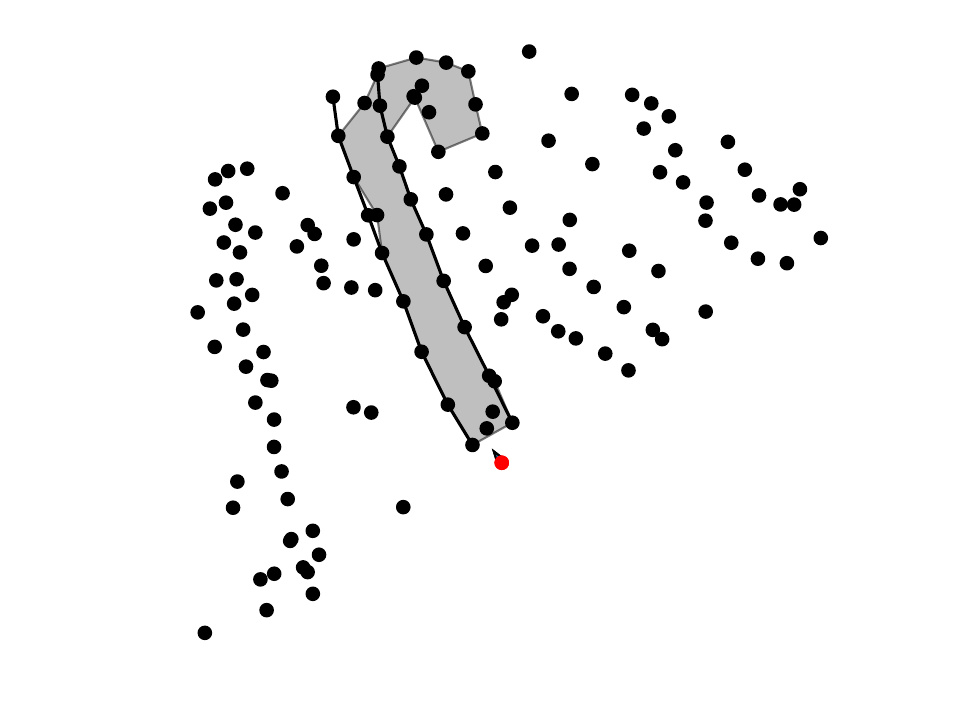}
		\end{adjustbox}
		\caption{Typical failure cases caused by a failure of the ranking to choose the most likely lane.} 
		\label{fig:failure-cases-ranking}
	\end{figure}
	
	We evaluated the ranking neural network on one-third of the dataset, while the remaining two-thirds were used for training, with false-positive rates of 0\% to 30\%. The mean Intersection over Union (mean-IoU) was used as the evaluation metric.
	The neural network achieved a mean-IoU of 95.10\% when evaluated on the complete dataset, including the training dataset. When evaluated on the hold-out evaluation dataset, the neural network achieved only a slightly lower mean-IoU of 94.79\% which demonstrates the ability to generalize well. Note that the maximum achievable IoU is not 100\% due to the enumeration not being able to always find the ground-truth candidate. Instead the maximum achievable IoU were 97.99\% and 97.75\% for the complete and the validation dataset respectively.
	
	We analyzed failure cases of the ranking (IoU $<$ 90\%), they are mainly diverging lanes, examples are shown in Fig. \ref{fig:failure-cases-ranking}. Most failure cases arise due to false-positives introducing ambiguity.
	
	\subsection{Execution time}
	\label{sec:exe-time-eval}

	\begin{figure}[htb]
		\centering
		\begin{adjustbox}{width=.7\linewidth,center}
			\includegraphics{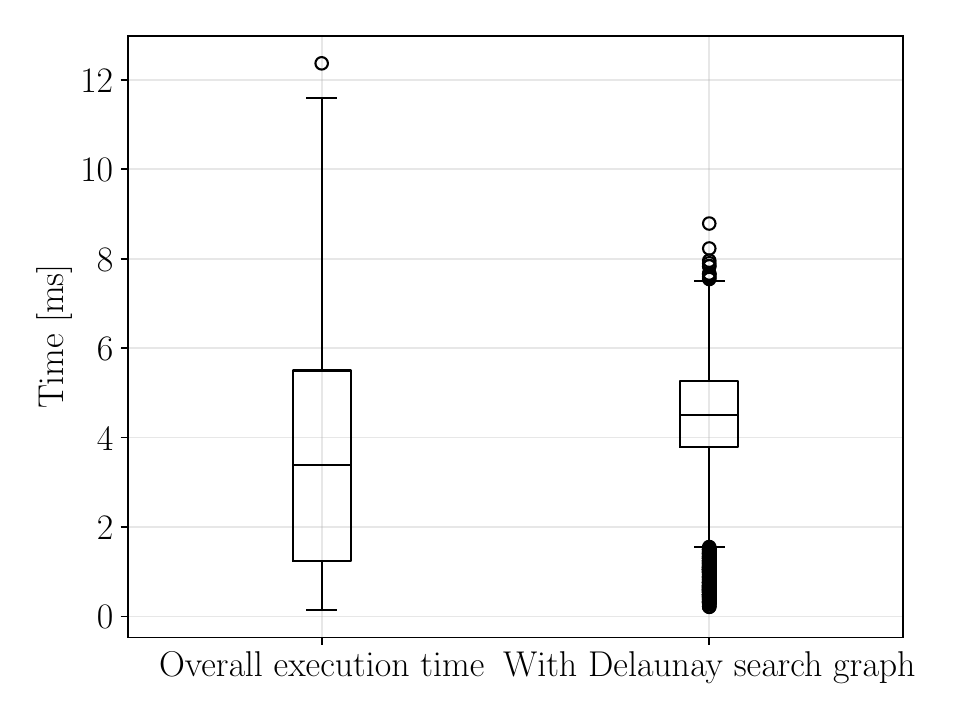}
		\end{adjustbox}
		\caption{Overall execution time of the proposed algorithm. Using the Delaunay-triangulation for constructing the search graph does not reduce the runtime significantly although the graph is theoretically sparser. (Sample size for each box-plot is 1964)}
		\label{fig:executiontime}
	\end{figure}
	
	\begin{figure}[htb]
		\centering
		\begin{adjustbox}{width=.99\linewidth,center}
			\includegraphics{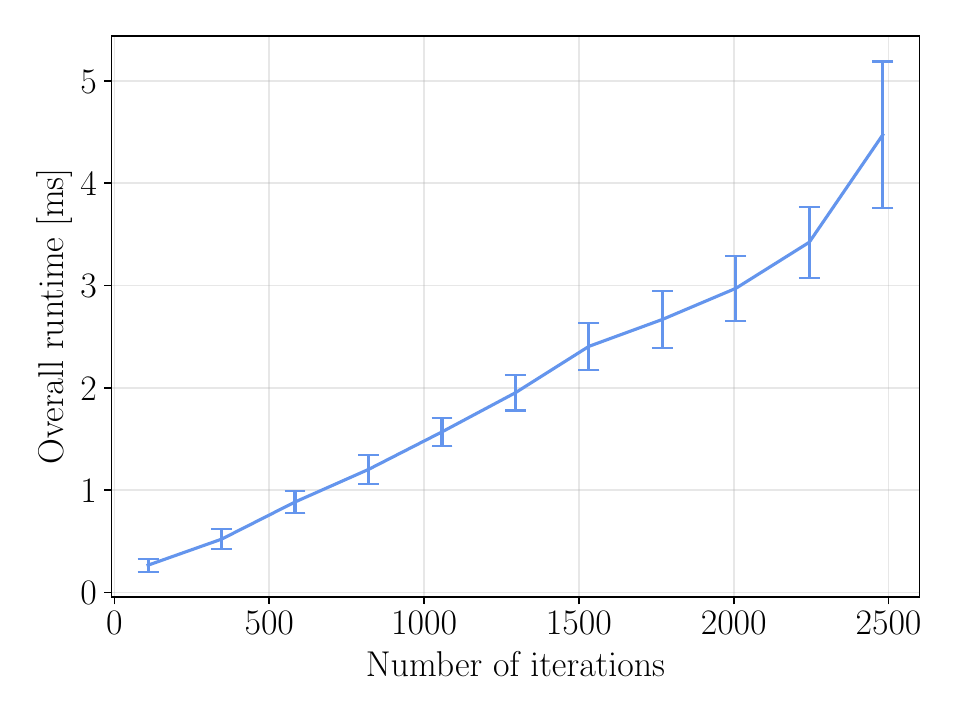}
			\includegraphics{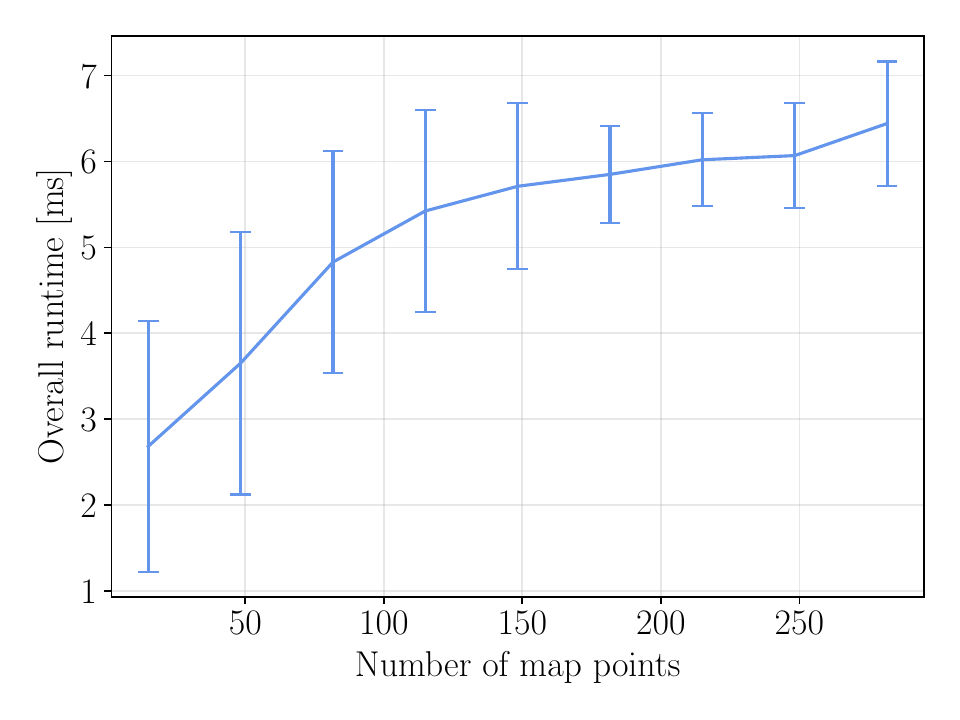}
		\end{adjustbox}
		\caption{Runtime of CLC over the number of iterations and number of map points respectively. The runtime of CLC increases linearly with the number of iterations whereas with the number of map points the runtime increases underproportional. Overall, the number of map points has a low correlation with the runtime. (The error bars show the interval of one standard deviation, sample size is 1964).}
		\label{fig:executiontime-over-iterations}
	\end{figure}	

	The overall execution time of CLC is shown in Fig \ref{fig:executiontime}. With a median runtime of under 5ms and a maximum runtime of 12ms, our algorithm CLC is able to meet the low-latency criterion required in autonomous racing. It is also able to run after each map update which happens at the LiDAR-rate of 20 Hz. When evaluating the runtime over the number of iterations performed and the number of input map points, we observe an approximately linear increase of the runtime when the iterations increase, whereas the number of map points only affects the runtime slightly (Fig. \ref{fig:executiontime-over-iterations}).
	Compared to the approach presented in \cite{9197098} where runtimes of 71ms - 176 ms are reported, our algorithm is faster while being able to handle an input containing much more than 16 points in the input map since the runtime does not grow over-proportionally with the number of map points. We also evaluated the algorithm using search graphs constructed by Delaunay-triangulation and otherwise the same setup, where we see no significantly lower runtime. This indicates that the search is not significantly more efficient despite the higher graph sparsity. By adjusting the maximum number of iterations, we can effectively limit the execution time of CLC and guarantee its low latency.
	
	Overall, the graph search is very efficient since it explores solutions only once and uses effective heuristics. The geometric constraints are also not too prohibitively expensive to compute when implemented with online-algorithms and short-circuiting is used.
	
	\section{Conclusion and further work}

	We presented a solution to the lane detection problem where the boundaries are sparsely marked only by 2D-points instead of continuous lines. By modeling both boundaries as paths in a graph and using exhaustive enumeration, we were able to hypothesize all possible driving lanes and thus be robust to high levels of false positives.
	By incorporating carefully designed geometric constraints on the candidate lanes, we were able to make the graph search efficient by pruning most of the solution set via backtracking.
	We presented simple and intuitive heuristics that quickly guide the graph search to locally optimal solutions,  allowing a near-ground-truth solution to be found in over 50\% of instances using less than 500 iterations of DFS.
	
	Our comprehensive evaluation on many different racetrack layouts, both easy and difficult with tight hairpin turns, shows superiority in both (1) the prediction horizon, where our approach is able to detect lanes over 100m in length and (2) in the robustness to ambiguous track layouts arising from partial map observation and false positives. Furthermore, our machine learning based approach for ranking candidate lanes effectively solves the pattern recognition problem and eliminates the need for manual tuning of heuristics. Overall, the proposed lane detection algorithm allows our racecar to drive at speeds up to 20m/s on previously unknown race tracks due to the long horizon prediction.
	The presented algorithm was used in the car of our Formula Student Team StarkStrom Augsburg, which participates in the most prestigious 
	Formula Student Germany competition, driving the fastest lap times from 2021 to 2023 and even winning the competition in 2022.

	Further work may include improving the graph search by (1) learning search heuristics, e.g with graph neural networks such as in \cite{8968113}, and (2) using exploration-vs-exploitation heuristics to help escape sub-optimal local minima. Another improvement may be to use boundary points directly as an input to the neural network instead of using hand-crafted features.
	
	\section*{Acknowledgements}
	We would like to thank the whole formula student team StarkStrom Augsburg for their support in maintaining and improving the race car and helping gather real data during test drives which made the evaluation possible.
	
	\bibliographystyle{IEEEtran}
	\bibliography{bibliography} 
	
	\begin{IEEEbiographynophoto}{Ivo Ivanov}
		Ivo Ivanov received his B.Sc degree in computer science from the Technical University of Applied Sciences Augsburg (Germany) in 2022 and is currently a master's student in computer science. He was an active member of the Formula Student team StarkStrom Augsburg from 2019 to 2021. Currently he focuses on research in perception and machine learning for autonomous driving.
	\end{IEEEbiographynophoto}
	\begin{IEEEbiographynophoto}{Carsten Markgraf}
		Prof. Dr.-Ing. Carsten Markgraf finished his studies of Electrical Engineering in 1997 at the University of Hannover (Germany). As a research assistant he was cooperating with VW to analyze infrastructure-based technologies for the automation of vehicle test tracks. After he finished his PhD on the topic "Automated Driving with Magnetic Nails" in 2002 he worked in the industry at ThyssenKrupp Automotive, first in Munich to develop intelligent chassis control systems and then in Liechtenstein to develop Electric Power Steering Systems (EPS). There he was responsible for the development and industrialization of the electrical, electronic and software components. After the first system went into mass production for the X3 BMW in 2010, he received the call from the Technical University of Applied Sciences Augsburg as professor in electrical engineering, responsible for Automatic Control and Dynamic Systems. Since then, he is also Faculty Advisor of the local Formula Student Team and leader of the research group DriverlessMobility. Since 2023 he is scientific director of the Technology Transfer Center Landsberg a. Lech, responsible for the Autonomous Systems.
	\end{IEEEbiographynophoto}
	
	\appendix[Correctness of the enumeration algorithm]

\begin{lemma}[Backtracking criterion for $\ConstraintSegments$-constraint]
	\label{lemma:bt-criterion}
	The segment angle constraint $\ConstraintSegments$ can no longer be satisfied by adding more points to either boundary once it is violated.
\end{lemma}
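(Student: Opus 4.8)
The plan is to exploit the \emph{locality} and \emph{monotonicity} of the constraint $\ConstraintSegments$. First I would make precise that $\ConstraintSegments$ decomposes as a conjunction of one condition per interior vertex of each boundary. Writing a boundary as a polygonal chain $(p_1, \ldots, p_n)$ and denoting the absolute turning angle at an interior vertex $p_i$ by $\alpha_i = |\angle(\overline{p_{i-1} p_i}, \overline{p_i p_{i+1}})|$ for $2 \le i \le n-1$, the constraint reads
\begin{equation*}
	\ConstraintSegments(P) = \bigwedge_{\text{boundary } \in \{L,R\}} \ \bigwedge_{i=2}^{n-1} \big[\alpha_i \le \MaxSemgentsAngle\big],
\end{equation*}
which holds vacuously for a chain with fewer than three points. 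The quantity to track is thus the finite family of per-vertex conditions.

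The key observation is that $\alpha_i$ depends only on the three consecutive points $p_{i-1}, p_i, p_{i+1}$. In Algorithm \ref{alg:enumerate_path_pairs_rec} new vertices are only ever appended to the back of a chain (the \code{push}/\code{pop} discipline of the depth-first search), so every extension considered in the recursive subtree of the current path pair is obtained by appending points to the ends of the two boundaries. I would therefore argue that appending a point leaves all existing vertices — and hence all existing interior angles $\alpha_i$ — unchanged, introducing at most one new interior angle at the formerly-last vertex. Consequently the family of per-vertex conditions only grows as points are added, and no previously evaluated condition can change its truth value.

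The conclusion then follows by monotonicity. Suppose $\ConstraintSegments$ is violated, so some interior vertex $p_j$ on the left or right boundary satisfies $\alpha_j > \MaxSemgentsAngle$. For any chain obtained by appending further points to \emph{either} boundary, the triple $p_{j-1}, p_j, p_{j+1}$ is unchanged: appending to the other boundary leaves this one untouched, while appending to the same boundary inserts vertices strictly after $p_{j+1}$. Hence $\alpha_j > \MaxSemgentsAngle$ still holds, the enlarged conjunction still contains this false conjunct, and $\ConstraintSegments$ remains violated. This is exactly the claim, and it justifies backtracking immediately upon a $\ConstraintSegments$-violation.

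The argument is essentially a one-line monotonicity claim, so I do not expect a substantial obstacle. The only point requiring care is to state precisely that the search modifies the chains only by appending at the end, so that the offending triple $(p_{j-1}, p_j, p_{j+1})$ is immutable across every descendant of the violating node in the DFS tree; once that immutability is pinned down, the persistence of the violating angle — and thus of the violation — is immediate.
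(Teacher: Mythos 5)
Your proof is correct and takes essentially the same route as the paper: decompose $\ConstraintSegments$ into a conjunction of per-vertex angle conditions and observe that extending a boundary only adds conjuncts, so a false conjunction stays false. You additionally make explicit a point the paper leaves implicit --- that appending points never changes the already-evaluated angles, since each angle depends only on a local triple of consecutive points and the DFS only appends at the ends --- which tightens rather than alters the argument.
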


\begin{proof}
	
	Given the two boundary polygonal chains $\LeftSpline$ and $\RightSpline$ represented by the path pair $P$ and defined by the point sequence $(l_1, ..., l_n)$ and $(r_1, ..., r_m)$ respectively, the constraint $\ConstraintSegments(P)$ is defined as follows:
	
	\begin{equation}
		\begin{aligned}
			\ConstraintSegments
			(P) = \bigwedge_{i=1}^{N-2} | \angle( \overline{ l_{i} l_{i+1} }, \overline{ l_{i + 1} l_{i + 2} }) |  < \MaxSemgentsAngle{} \\ \land  \bigwedge_{i=1}^{M - 2} | \angle( \overline{ r_{i} r_{i+1} }, \overline{ r_{i + 1} r_{i + 2} }) |  < \MaxSemgentsAngle
		\end{aligned}    
	\end{equation}
	
	By adding more vertices, more line segments are added and thus more consecutive angles are computed and therefore more boolean variables are added over which the conjunction is evaluated.
	Since a conjunction that is false cannot become true by adding more boolean variables to it, the $\ConstraintSegments$ constraint can no longer be satisfied by adding more points to either boundary once it is violated.
\end{proof}

\begin{lemma}[Backtracking criterion $\ConstraintSimplePoly$-constraint]
	\label{lemma:polygon-simplicity-bt-criterion}
	The	constraint $\ConstraintSimplePoly$ requiring the driving lane polygon to be simple cannot be satisfied anymore by extending the lane except if it is violated by the line segment between the last points of the two boundaries intersecting any other line segment.
\end{lemma}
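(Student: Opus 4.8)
The plan is to exploit the incremental structure of the lane polygon under extension. First I would fix the polygon representation introduced for $\ConstraintSimplePoly$: given boundaries $\LeftSpline = (l_1, \ldots, l_n)$ and $\RightSpline = (r_1, \ldots, r_m)$, the driving lane polygon has vertex sequence $(l_1, \ldots, l_n, r_m, \ldots, r_1)$. Its edges are therefore the left-boundary segments $\overline{l_i l_{i+1}}$, the right-boundary segments $\overline{r_{j+1} r_j}$, the \emph{near cap} $\overline{r_1 l_1}$, and the \emph{far cap} $\overline{l_n r_m}$ connecting the last points of the two boundaries. The constraint $\ConstraintSimplePoly$ is violated precisely when two non-adjacent edges of this polygon intersect, so I would frame the whole argument in terms of such a violating pair.

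The key structural observation is to characterize exactly which edges change when the lane is extended. Extending the left boundary by a point $l_{n+1}$ yields the polygon $(l_1, \ldots, l_n, l_{n+1}, r_m, \ldots, r_1)$; extending the right boundary by $r_{m+1}$ yields $(l_1, \ldots, l_n, r_{m+1}, r_m, \ldots, r_1)$. In both directions I would verify that every edge of the old polygon survives \emph{except} the far cap $\overline{l_n r_m}$, which is deleted and replaced by two new edges incident to the newly added vertex. Crucially, no new vertex is inserted among the old ones, so every surviving edge keeps both its geometry and its endpoints, and the adjacency relation among surviving edges is unchanged.

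With this in hand the lemma follows by a case split on a violating pair of non-adjacent intersecting edges $e$ and $f$. If neither is the far cap, then both survive any single extension, they remain non-adjacent (adding vertices only at the far end cannot make two previously vertex-disjoint edges share a vertex), and they continue to intersect; hence the extended polygon is still non-simple. Since the surviving pair is never the new far cap of the extended polygon, the same argument applies again, so by induction the violation persists under any finite sequence of extensions, and the constraint can never be restored — which justifies backtracking. If, on the other hand, one of $e, f$ is the far cap $\overline{l_n r_m}$, then that edge is removed by the very next extension and the persistence argument breaks down; this is exactly the exceptional case stated in the lemma, in which the violation may be resolved and backtracking is therefore not warranted.

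I expect the main obstacle to be essentially bookkeeping rather than depth: carefully confirming the ``only the far cap changes'' claim for both extension directions, and handling the edges meeting at the junction so that edges sharing a vertex legitimately are not counted as a violation. Once the edge-update structure is pinned down, the persistence of any non-far-cap intersection is immediate, since the conjunction of ``no two non-adjacent edges cross'' can only gain, never lose, surviving witnesses under extension.
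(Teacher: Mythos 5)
Your proof is correct and takes essentially the same approach as the paper's: both rest on the observation that extending either boundary replaces only the far cap $\overline{l_n r_m}$ while every other polygon edge (and its adjacency status) survives unchanged, so an intersection between two non-far-cap edges can never be undone, whereas one involving the far cap may be. Your explicit edge bookkeeping and induction over repeated extensions simply spell out rigorously what the paper's terser proof asserts directly.
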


\begin{proof}
	The driving lane polygon $\LanePolygon$ is defined by the sequence of left boundary points concatenated with the right boundary points in reversed order: $\LanePolygon = (l_1, ..., l_n, r_m, ..., r_1)$. A polygon is simple if no line segment of its boundary intersects any other (non-adjacent) line segment.
	
	The line segment $\overline{l_n r_m}$ between the last points of the two boundaries changes every time the lane is extended (new points are added), since new points become the last ones. If therefore $\overline{l_n r_m}$ intersects any other segment, this intersection may be undone. All other edges of the polygon do not change as the lane is extended, therefore if any two segments intersect and none of them is $\overline{l_n r_m}$, this intersection cannot be undone and $\ConstraintSimplePoly$ cannot be satisfied anymore.
\end{proof}

\begin{lemma}[Backtracking criterion $\ConstraintLaneWidth$-constraint]
	\label{lemma:constraint-lw-bt-criterion}
	If $\ConstraintLaneWidth$ is violated, it may be satisfied by extending the lane depending on the following three cases:
	\begin{enumerate}
		\item If it is violated by a fixed matching line being too long or too short, it cannot be satisfied anymore
		\item If it is violated by a mutable matching line being too short, it cannot be satisfied anymore
		\item If it is violated by a mutable matching line being too long, it may be satisfied by extending the boundaries
	\end{enumerate}
\end{lemma}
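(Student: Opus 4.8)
The plan is to reduce all three cases to one monotonicity property of the nearest-neighbor matching, supplemented by the fact that the fixed matching set $M_{fixed}^{t}$ only ever grows. Recall from Eq.~\ref{eq:lane-width} that each width value $w_i$ is the length $\|\LeftSpline(u_i) - \RightSpline(v_i)\|_2$ of a matching line, where $(u_i, v_i)$ is an $\argmin$ over one of the nearest-neighbor search spaces of Eq.~\ref{eq:matching-points}, and that $\ConstraintLaneWidth$ (Eq.~\ref{eq:lane-width-constraint-def}) holds iff every $w_i$ lies strictly between $\MinLaneWidth$ and $\MaxLaneWidth$. Extending the lane appends points to $\LeftSpline$ and/or $\RightSpline$ at their open ends, which enlarges the target interval of each search while leaving every already-existing query vertex or segment in place.

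First I would prove the central tool, \emph{monotonicity of the matched width under extension}: for a query that already exists, its matched distance is a minimum of $\|\LeftSpline(a) - \RightSpline(b)\|_2$ taken over a target interval; appending points to the target boundary only enlarges that interval, so the minimum -- and hence the width -- can only decrease or stay equal, while appending points to the query-side boundary adds new queries but leaves the existing one's distance untouched. Thus every surviving query has a non-increasing width over any sequence of future extensions.

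The two ``short'' situations then follow at once. For case (2), a mutable line with $w_i < \MinLaneWidth$ can, by monotonicity, only shrink further upon recomputation, so the offending query stays below $\MinLaneWidth$ and $\ConstraintLaneWidth$ remains violated regardless of how the lane grows. The short half of case (1) is the same statement and is in fact doubly safe, since the line is also never recomputed. The too-long half of case (1) carries the real content: a fixed matching point precedes the first end-touching matching point (Alg.~\ref{alg:online-lane-width}), so neither of its matched parameters has reached the open end of its boundary; granting that appending points at the open end leaves such an interior match unchanged, its stored width is invariant, and because $M_{fixed}^{t}$ only grows this out-of-range value reappears in the conjunction of Eq.~\ref{eq:lane-width-constraint-def} at every later iteration, so the constraint can never again hold. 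Case (3) is the sole exception, exactly as in Lemma~\ref{lemma:polygon-simplicity-bt-criterion}: a mutable line with $w_i > \MaxLaneWidth$ may, by monotonicity, drop below $\MaxLaneWidth$ once a nearer point is appended to its target boundary, so the violation is not provably permanent and backtracking must be suppressed.

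The hard part is discharging the ``granting'' above, i.e. proving that never recomputing a fixed line is sound: its stored length must equal the width it would have under the true matching of any extension. Equivalently, no point appended at the open end may be nearer to an interior query than that query's current partner -- otherwise monotonicity would let even a fixed too-long line decrease into range. I would argue this geometrically from the two constraints already enforced when $\ConstraintLaneWidth$ is tested: $\ConstraintSegments$ bounds the turning angle so the boundaries cannot fold sharply back toward the already-matched interior, and $\ConstraintSimplePoly$ forbids the lane polygon from self-intersecting, so the appended region remains on the far side of the sealing end-matching and cannot reach closer to an interior query than its existing partner. Turning this separation into a quantitative lower bound on the distance from any appended point to a fixed interior query is the delicate step; everything else collapses to the one-line monotonicity argument.
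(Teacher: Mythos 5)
Your monotonicity tool, and your handling of case (2), case (3), and the too-short half of case (1), are correct and essentially coincide with the paper's argument, which packages the same facts as four set-level inequalities for the nearest-neighbor search (enlarging the target interval can only shrink a matched distance; enlarging the query set only adds distances). The genuine gap is in the too-long half of case (1), which you yourself flag as undischarged: you set out to prove that a fixed matching line's stored length is invariant under a from-scratch recomputation of the matching (Eq. \ref{eq:matching-points}) on any extension, and you only sketch a geometric separation argument from $\ConstraintSegments$ and $\ConstraintSimplePoly$ without turning it into the needed quantitative bound. That step is not only incomplete, it is unnecessary: the terms \emph{fixed} and \emph{mutable} in the lemma refer to the data structures of the online algorithm (Alg. \ref{alg:online-lane-width}), and the paper evaluates the constraint Eq. \ref{eq:lane-width-constraint-def} over exactly those sets $M_{fixed}^{t} \cup M_{mut}^{t}$, not over a fresh evaluation of Eq. \ref{eq:matching-points}. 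Since line 8 of Alg. \ref{alg:online-lane-width} only ever appends to $M_{fixed}$, a violating fixed width remains a conjunct of the constraint at every later iteration, and a conjunction with a false conjunct cannot become true by adding further conjuncts. This one-line combinatorial argument settles both halves of case (1) with no geometry at all, and it is precisely the paper's proof.

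Your underlying worry -- that freezing fixed matchings might be unsound relative to the definitional matching of Eq. \ref{eq:matching-points}, because a point appended at an open end could be nearer to an interior query than its frozen partner -- is a legitimate observation about the paper's formalization; the paper itself concedes in the caption of Fig. \ref{fig:lane-width-calc} that the online evaluation can overestimate the lane width relative to the batch computation. But that concerns the fidelity of Alg. \ref{alg:online-lane-width} to Eq. \ref{eq:matching-points}, not the lemma as stated, whose three cases are phrased entirely in terms of the online algorithm's fixed and mutable lines. By importing that stronger claim you made case (1) depend on a geometric lemma you could not prove, where the intended argument is purely about the growth of a conjunction.
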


\begin{proof}
	
	To show the first case, we observe that the online algorithm may only add more matching points to set of fixed matching points (see Alg. \ref{alg:online-lane-width}, line 8). 
	Therefore, only more boolean variables are added to constraint Eq. \ref{eq:lane-width-constraint-def} and since a conjunction that is false cannot become true by adding more boolean variables to it, the constraint cannot be satisfied anymore once it is violated.\\
	
	For the remaining two cases, observe that we can rewrite constraint \ref{eq:lane-width-constraint-def} as:
	\begin{align}
		\ConstraintLaneWidth(P) = \min\{w_i \mid w_i \in \LaneWidth(L, R)\} > \MinLaneWidth \label{eq:lw-constraint-min}\\
		\land \max\{w_i \mid w_i \in \LaneWidth(L, R)\} < \MaxLaneWidth \label{eq:lw-constraint-max}
	\end{align}

	If the constraint violation is due to a matching line which belongs to the mutable set, it may be recomputed, we thus need to consider the computation of matching points (Eq. \ref{eq:lane-width-nn-search}) which is based on nearest neighbor search. Consider the general case of a nearest neighbor search between the query and target sets $\Set{Q} \subset \Rtwo$ and $\Set{T} \subset \Rtwo$:
	
	\begin{align}
		NN(\Set{Q}, \Set{T}) := \{ \min_{t_i \in \Set{T}} \{|| q_i - t_i||_2\} \mid q_i \in \Set{Q}\} 
	\end{align}
	
	The following inequalities hold: 
	
	\begin{align}
		\max{NN(\Set{Q}, \Set{T})} \leq \max{NN(\Set{Q'}, \Set{T})}, \Set{Q} \subset \Set{Q'} \subset \Rtwo \label{eq:nn-max-larger-query}\\
		\min{NN(\Set{Q}, \Set{T})} \geq \min{NN(\Set{Q'}, \Set{T})}, \Set{Q} \subset \Set{Q'} \subset \Rtwo \label{eq:nn-min-larger-query}\\
		\max{NN(\Set{Q}, \Set{T})} \geq \max{NN(\Set{Q}, \Set{T'})}, \Set{T} \subset \Set{T'} \subset \Rtwo \label{eq:nn-max-larger-target} \\
		\min{NN(\Set{Q}, \Set{T})} \geq \min{NN(\Set{Q}, \Set{T'})}, \Set{T} \subset \Set{T'} \subset \Rtwo \label{eq:nn-min-larger-target}
	\end{align}

	As the boundaries are extended, the query as well as the target domains of the NN-search becomes larger (Eq. \ref{eq:lane-width-nn-search} and \ref{eq:matching-points}). If the $\ConstraintLaneWidth$-constraint is therefore violated by a matching line that belongs to the mutable set being too short (Eq. \ref{eq:lw-constraint-min}), from equations \ref{eq:nn-min-larger-query} and \ref{eq:nn-min-larger-target} it follows that it cannot become longer and therefore the constrain cannot be satisfied anymore by extending the boundaries, concluding the second case.
	
	If however the $\ConstraintLaneWidth$-constraint is violated by a mutable matching line that is too long (Eq. \ref{eq:lw-constraint-max}), from equation \ref{eq:nn-max-larger-target} it follows that it may become shorter by extending the boundaries and therefore the constrain may be satisfied, concluding the third case.
\end{proof}

\begin{theorem}[Backtracking does not exclude valid solutions \ref{alg:enumerate_path_pairs_rec}]\label{thm:correctness-enumeration}
	Backtracking in Algorithm \ref{alg:enumerate_path_pairs_rec} does not exclude candidates which would satisfying constraints.
\end{theorem}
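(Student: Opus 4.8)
The plan is to recast the statement as a \emph{soundness} property of the backtracking-decider: I would show that whenever $\BacktrackingDecider$ triggers a backtrack at a partial path pair, no feasible completion of that pair is lost. To make this precise, call $\tilde P = (\tilde L, \tilde R)$ an \emph{extension} of $P = (L,R)$ if $L$ is a prefix of $\tilde L$ and $R$ a prefix of $\tilde R$ as vertex sequences; equivalently, $\tilde P$ is obtained from $P$ by the vertex-appending steps of Algorithm~\ref{alg:enumerate_path_pairs_rec}. Since the constraint decider $\ConstratintDecider$ only governs whether a pair is \emph{recorded} (line~21) and never blocks the recursive call, and since the iteration cap $it_{max}$ merely limits depth and is orthogonal to the correctness of the pruning rule, the only mechanism that removes subtrees is $\BacktrackingDecider$ (line~22). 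The theorem therefore reduces to the local claim: if $\BacktrackingDecider(P,\cdot,\cdot)=\textrm{true}$, then no extension $\tilde P$ of $P$ satisfies $\ConstratintDecider(\tilde P)$, i.e. $\FeasibleSolutionSet$ contains no extension of $P$.

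I would prove this local claim by case analysis over the constraint whose violation causes the backtrack. By the definition of $\BacktrackingDecider$, it returns $\textrm{true}$ only when $\ConstratintDecider(P)$ is false, and only for violations that are \emph{not} among the recoverable exceptions of Lemmata~\ref{lemma:polygon-simplicity-bt-criterion} and~\ref{lemma:constraint-lw-bt-criterion}. Recalling that the distance bound $\MaxConeDistance$ is already enforced by the graph construction, the only constraints $\ConstratintDecider$ can report violated are $\ConstraintSegments$, $\ConstraintSimplePoly$, and $\ConstraintLaneWidth$. For each, the matching lemma characterizes exactly when a violation is irrecoverable: Lemma~\ref{lemma:bt-criterion} gives that $\ConstraintSegments$ can never be re-satisfied once broken; Lemma~\ref{lemma:polygon-simplicity-bt-criterion} gives that $\ConstraintSimplePoly$ is irrecoverable except when the offending intersection involves the closing segment $\overline{l_n r_m}$; and Lemma~\ref{lemma:constraint-lw-bt-criterion} gives that $\ConstraintLaneWidth$ is irrecoverable except when a \emph{mutable} matching line is too long. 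Because $\BacktrackingDecider$ returns $\textrm{false}$ in precisely these recoverable cases, every case in which it returns $\textrm{true}$ corresponds to a violation that, by the relevant lemma, cannot be removed by appending further vertices. As feasibility of $\tilde P$ requires all three constraints to hold simultaneously, a single irrecoverable violation at $P$ already rules out feasibility of every extension, which establishes the local claim.

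With the local claim in hand, the theorem follows by contraposition along the construction of any feasible candidate. Let $\tilde P \in \FeasibleSolutionSet$ and let $P$ be any prefix produced while Algorithm~\ref{alg:enumerate_path_pairs_rec} builds $\tilde P$. Then $\tilde P$ is a feasible extension of $P$, so by the local claim $\BacktrackingDecider(P,\cdot,\cdot)$ cannot be $\textrm{true}$; hence the algorithm never backtracks at a prefix of $\tilde P$ and the recursive call on line~23 is always made. Since the depth-first loop marks each unvisited adjacent vertex on \emph{both} sides as visited in turn (lines~18--19) and only returns once $u_0=u_1=\EmptySet$, every single-vertex append on either boundary roots a recursion, so the full tree of extensions is traversed whenever no prefix is pruned. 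Consequently the search reaches $\tilde P$, at which point $\ConstratintDecider(\tilde P)$ holds and $\tilde P$ is added to the output on line~21; thus no feasible candidate is excluded by backtracking.

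The step I expect to be the main obstacle is the case analysis of the second paragraph, and in particular making the lemmata's informal notion of ``extending the lane'' coincide exactly with the single-vertex appends of the algorithm, together with verifying that the three listed constraints are genuinely exhaustive of what $\ConstratintDecider$ tests, so that no fourth, unclassified failure mode can cause an unsound backtrack. The monotonicity facts needed for the $\ConstraintLaneWidth$ case---that enlarging the query and target domains of the nearest-neighbour search can only shorten an over-long mutable matching line while it can never lengthen a too-short one---are already supplied by the inequalities in Lemma~\ref{lemma:constraint-lw-bt-criterion}, so the residual work is organizational rather than a new estimate.
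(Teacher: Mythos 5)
Your proposal is correct and takes essentially the same route as the paper's own proof, which likewise reduces the theorem to Lemmata \ref{lemma:bt-criterion}, \ref{lemma:polygon-simplicity-bt-criterion} and \ref{lemma:constraint-lw-bt-criterion}, i.e.\ to the fact that $\BacktrackingDecider$ prunes only path pairs whose constraint violation is irrecoverable under extension. The paper's proof is a two-sentence citation of those lemmata; your explicit local soundness claim, prefix/extension formalization, and the check that $\ConstratintDecider$ tests exactly the three constraints $\ConstraintSegments$, $\ConstraintSimplePoly$, $\ConstraintLaneWidth$ simply spell out what the paper leaves implicit.
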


\begin{proof}
	Algorithm \ref{alg:enumerate_path_pairs_rec} backtracks in line 22, i.e. it does not extend the boundaries further. 
	By lemmata \ref{lemma:bt-criterion}, \ref{lemma:polygon-simplicity-bt-criterion} and \ref{lemma:constraint-lw-bt-criterion}, backtracking does not exclude candidates which would satisfy the constraints and thus the enumeration is complete given a high enough iteration limit.
\end{proof}

\end{document}